\newcommand{\C}{\mathbb{C}}
\newcommand{\F}{\mathbb{F}}
\newcommand{\R}{\mathbb{R}}
\newcommand{\Z}{\mathbb{Z}}
\newcommand{\sgn}{\mathrm{sgn}}
\newcommand{\calH}{\mathcal{H}}
\newcommand{\calV}{\mathcal{V}}
\newcommand{\sse}{\subseteq}
\newcommand{\D}{\mathrm{D}}
\newcommand{\twoD}{{$2\D$}}
\newcommand{\threeD}{{$3\D$}}
\DeclareMathOperator{\SO}{SO}
\DeclareMathOperator{\SU}{SU}
\DeclareMathOperator{\GL}{GL}
\DeclareMathOperator{\PGL}{PGL}
\DeclareMathOperator{\Proj}{P}
\DeclareMathOperator{\id}{id}
\DeclareMathOperator{\ran}{ran}
\DeclareMathOperator{\Hom}{Hom}
\newcommand{\sprod}[1]{\langle #1 \rangle}
\DeclareRobustCommand\groupG{G}
\DeclareRobustCommand\groupH{H}
\newcommand{\myparagraph}[1]{\textbf{{#1}.}}
\theoremstyle{plain}
\newtheorem{theorem}{Theorem}[section]
\newtheorem{proposition}[theorem]{Proposition}
\newtheorem{lemma}[theorem]{Lemma}
\newtheorem{corollary}[theorem]{Corollary}
\theoremstyle{definition}
\newtheorem{definition}[theorem]{Definition}
\theoremstyle{remark}
\newtheorem{remark}[theorem]{Remark}
\newtheorem{example}[theorem]{Example}
\title{In search of projectively equivariant networks}
\author{\name Georg Bökman$^*$ \email bokman@chalmers.se \\
      \addr Department of Electrical Engineering\\
      Chalmers University of Technology
      \AND
      \name Axel Flinth$^*$ \email axel.flinth@umu.se\\
      \addr Department of Mathematics and Mathematical Statistics \\
      Umeå University
      \AND
      \name Fredrik Kahl \email fredrik.kahl@chalmers.se\\
      \addr Department of Electrical Engineering \\
      Chalmers University of Technology\\
      }
\begin{document}

\maketitle

\def\thefootnote{*}\footnotetext{Equal contribution}\def\thefootnote{\arabic{footnote}}

\begin{abstract}
    Equivariance of linear neural network layers is well studied.
    In this work, we relax the equivariance condition to only be true in a projective sense.
    Hereby, we introduce the topic of projective equivariance to the machine learning audience.
    We theoretically study the relation of projectively and linearly equivariant linear layers. We find that in some important cases, surprisingly, the two types of layers coincide.
    We also propose a way to construct a projectively equivariant neural network, which boils down to building a standard equivariant network where the linear group representations acting on each intermediate feature space are lifts of projective group representations.
    Projective equivariance is showcased in two simple experiments. Code for the experiments is provided at \href{https://github.com/usinedepain/projectively_equivariant_deep_nets}{github.com/usinedepain/projectively\_equivariant\_deep\_nets}
\end{abstract}

Deep neural networks have been successfully applied across a large number of areas, including but not limited to computer vision \citep{alexnet}, natural language processing \citep{bert}, game play \citep{alphazero} and biology \citep{alphafold}.
In many of these areas, the data has geometric properties or contains symmetries that can be exploited when designing neural networks.
For instance, AlphaFold \citep{alphafold} models proteins in \threeD-space while respecting translational and rotational symmetries.
Much of the work on neural networks respecting geometry and symmetry of the data can be boiled down
to formulating the symmetries in terms of \emph{group equivariance}.
Group equivariance of neural networks is a currently very active area of research starting with \citep{shawe-taylor} and brought into the deep networks era by \cite{cohenGroupEquivariantConvolutional2016}.
Group equivariant neural networks are part of the broader framework of \emph{geometric deep learning}.
Recent surveys include~\citep{bronstein, gerken}. Our interests in equivariance are driven by applications in computer vision, \citep{zznet,bokman2022case,bokman2023investigating}, but in this paper, we will give application examples in several different research fields, targeting a more general machine learning audience.

Given a set of transformations $\mathcal{T}$ acting on two sets $X$, $Y$, a function $f: X\to Y$ is called \emph{equivariant} if applying $f$ commutes with the transformations $t\in\mathcal{T}$, i.e., 
\begin{align}
    t[f(x)]=f(t[x]) \label{eq:equi} .
\end{align}
Concretely, consider rotations acting on point clouds and $f$ mapping point clouds to point clouds.
If applying $f$ and then rotating the output yields the same thing as first rotating the input and then applying $f$, we call $f$ rotation equivariant.
In the most general scenario, $t\in\mathcal{T}$ could act differently on $X$ and $Y$; For instance, if the action on $Y$ is trivial, we can model invariance of $f$ in this way.

If the transformations $\mathcal{T}$ form a group, results from abstract algebra can be used to design equivariant networks.
A typical way of formulating an equivariance condition in deep learning is to consider
the neural network as a mapping from a vector space $V$ to another vector space $W$
and requiring application of the network to commute with group actions on $V$ and $W$.
As $V, W$ are vector spaces the setting is nicely framed in terms of representation theory (we will provide a brief introduction in Section \ref{sec:theory}). 
What happens however when $V$ and/or $W$ are not vector spaces?
This work covers the case when $V$ and $W$ are projective spaces, i.e., vector spaces modulo multiplication by scalars. 
This generalization is motivated, for instance, by computer vision applications.
In computer vision it is common to work with homogeneous coordinates of \twoD~points, which are elements of the projective space $\Proj(\R^3)$.
In Section \ref{sec:theory}, we will describe more examples.


In the vector space case, any multilayer perceptron can be made equivariant by choosing the linear layers as well as non-linearities equivariant. This is one of the core principles of geometric deep learning. In this paper, we will theoretically investigate the consequences of applying the same strategy for building a projectively equivariant (to be defined below) multilayer perceptron, i.e. one that fulfills \eqref{eq:equi} only up to a scalar (which may depend on $x$). More concretely, we will completely describe the sets of projectively invariant linear layers in a very general setting (Theorem \ref{th:main}). 

The most interesting consequence of Theorem \ref{th:main} is a negative one: In two important cases, including the $\SO(3)$-action in the pinhole camera model and permutations acting on tensors of not extremely high order,
the linear layers that are equivariant in the standard sense (\emph{linearly equivariant}) are \emph{exactly the same} as the projectively equivariant ones. This is surprising, since  projective equivariance is a weaker condition and hence should allow more expressive architectures. Consequently, any projectively equivariant multilayer perceptron must in these cases either also be linearly equivariant, or employ non-trivial projectively equivariant non-linearities.

Theorem \ref{th:main} also suggests a natural way of constructing a projectively equivariant network that for certain groups is different from the linearly equivariant one. These networks are introduced in Section~\ref{sec:projarc}.
We then describe a potential application for them: Classification problems with class-varying symmetries. In Section \ref{sec:vierer}, we describe this application, and perform some proof-of-concept experiments on modified MNIST data.
In Section~\ref{sec:exp} we generalize Tensor Field Networks~\citep{tensorfield} to spinor valued data, whereby we obtain a network that is equivariant to projective actions of $\SO(3)$.

It should be stressed that the main purpose of this article is \emph{not} to construct new architectures that can beat the state of the art on concrete learning tasks, but rather to theoretically invest what projectively equivariant networks can (and more importantly, cannot) be constructed using the geometric deep learning framework. The boundaries that we establish will guide any practitioner trying to exploit projective equivariance.

We have for convenience collected important notation used in this article in Table~\ref{tab:glossary}.
Appendix~\ref{app:defs} also contains definitions of the mathematical objects we use in the article.

\section{Projective equivariance}\label{sec:theory}
In this work, we are concerned with networks that are \emph{projectively equivariant}. Towards giving a formal definition, let us first discuss how equivariance can be formulated using the notion of a \emph{representation} of a group. 

\begin{table}[t]
  \footnotesize  
  \centering
\begin{tabular}{l  l  | l  l}
   \toprule
     $\Proj(V)$ & Projective space of $V$, i.e., $V/(\F\setminus\{0\})$ &
     $\Pi_V$ & Projection map from $V$ to $\Proj(V)$ \\
     $\Hom(V,W)$ & Space of linear maps from $V$ to $W$ &
     $\GL(V)$& General linear group of $V$ \\
     $\PGL(V)$& Projective general linear group of $V$  &
     $\varphi$ & Group covering map \\
      $\rho$ & Projective representation &
     $\widehat{\rho}$ & Linear representation \\
      $[n]$ & $\{0, 1,\dots, n-1\}$ &
      $\Z_n$ & Additive group of integers modulo $n$  \\
     $S_n$ & Permutation group of $n$ elements &
      $A_n$ & Permutations of signature $1$ \\
     $\SO(3)$ & Rotation group in \threeD  &
      $\SU(2)$ & Unitary matrices in $\C^{2,2}$ with determinant $1$ \\ 
      $\mathrm{id}$ &  The identity matrix & $\sgn$ & The signum function on $S_n$. \\ 
      \bottomrule
\end{tabular}
    \caption{Symbol glossary}
    \label{tab:glossary}
\end{table}

A \emph{representation} $\widehat\rho$ of a group $G$ on a vector space $V$ over a field $\F$ is a map which associates each group element $g$ with an isomorphism $\widehat{\rho}(g): V\to V$ in a manner that respects the structure of the group, i.e.,
\begin{align} \label{eq:representation}
    \widehat{\rho}(g)\widehat{\rho}(h) = \widehat{\rho}(gh) \text{ for all } g,h \in G.
\end{align}
A more compact way of stating this is that $\widehat\rho$ is a \emph{group homomorphism} from the group $G$ to the group of isomorphisms (\emph{general linear group}) $\GL(V)$ on $V$.

We also assume that $G$ is equipped with a topology, with respect to which the multiplication and the inverse operation are continuous (that is, $G$ is a \emph{topological group}). The representations are assumed to be continuous with respect to that topology. As we will also consider \emph{projective representations} in this paper, we will refer to `ordinary' representations \eqref{eq:representation} as \emph{linear representations}.
Given linear representations $\widehat{\rho}_0$ and $\widehat{\rho}_1$ on spaces $V$ and $W$, respectively, we say that a map $\Phi: V \to W$ is \emph{equivariant} with respect to $\widehat{\rho}_0$ and $\widehat{\rho}_1$ if
\begin{align} \label{eq:equivariance}
    \widehat{\rho}_1(g) \Phi(v) = \Phi (\widehat{\rho}_0(g)v) \text{ for all } g \in G, v\in V.
\end{align}
If in \eqref{eq:equivariance}, $\widehat \rho_1(g)$ is the identity transformation for every $g\in G$, we say that $\widehat\rho_1$ acts trivially and then $\Phi$ is called \emph{invariant}. 
Many 
symmetry conditions can be phrased in terms of \eqref{eq:equivariance}, and ways to design neural networks $\Phi$ satisfying \eqref{eq:equivariance} have been extensively studied for e.g.\ rigid motions in \twoD/\threeD/$n\D$ \citep{cohenGroupEquivariantConvolutional2016, weiler_e2_2019, zznet, bekkers, tensorfield, cesaprogram},
permutations of graph nodes \citep{maron2018invariant,deepset,pointnet} and for more general groups $G$ \citep{shawe-taylor, kondor, cohenGeneral, Aronsson, finzi2021practical}. The most famous example is probably the translation equivariance of CNNs \citep{neocognitron, lecun}.

In this work, we want to study networks that are \emph{projectively in- and equivariant}. To explain this notion, let us first introduce the notation $\Proj(V)$ for the projective space associated to a vector space $V$ over a field $\F$. That is, $\Proj(V)$ is the space of  equivalence classes of $V\setminus\{0\}$ under the equivalence relation
\mbox{$v \sim w \Longleftrightarrow \exists\lambda\in\F\setminus\{0\}\text{ s.t. } v = \lambda w$}.
 Furthermore, we will write $\Pi_V$ for the projection that maps $v\in V$ to its equivalence class $\Pi_V(v)\in\Proj(V)$.

\begin{figure}
    \centering
    \begin{minipage}[c]{0.35\textwidth}
    \centering
    \includegraphics[angle=0, width=\textwidth]{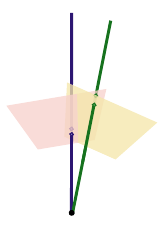}
    \end{minipage}
    \hfill
    \begin{minipage}[c]{0.6\textwidth}
    \caption{The pinhole camera model. The black dot is the center of projection, i.e. the camera center.
    The red and yellow planes represent two different image planes corresponding to two different
    viewing directions of a camera at the black dot.
    \threeD~points are projected to the image planes via their viewing rays --- the straight lines connecting them to the center of projection.
    \threeD~rotations act on points in (say) the yellow plane in a projective manner ---
    when we rotate the camera from the yellow plane to the red plane,
    points on the yellow plane do usually not lie on the red plane but can be identified with points on the red plane by projection in the camera center.
    E.g. the green point on the yellow plane transforms to a point on the red plane under rotation
    of the camera by identifying the point with the green line and then projecting this line to the red plane.
    When working with homogeneous coordinates in computer vision, we identify points in the image plane with their viewing rays.
    Rotations act on homogeneous coordinates through a projective representation. See also Example~\ref{ex:cameramodel} and Example~\ref{ex:pinhole-rot}.
    }
    \label{fig:pinhole}
    \end{minipage}
\end{figure}

\begin{example} \label{ex:cameramodel}
The \emph{pinhole camera model} is  well-known and popular in computer vision \citep{hartleyMultipleViewGeometry2003}.
We illustrate it in Figure~\ref{fig:pinhole}.
In essence, the idea is to identify a point $y\in \R^2$ in an image with the line in $\R^3$ that projects to $y$ through the pinhole camera.
Mathematically, this amounts to embedding $y\in \R^2$ to the \threeD~point $x=[y,1]\in \R^3$, and then considering it as a point in $\Proj(\R^3)$.
A \twoD~point cloud can in the same way be identified with a set of points in homogeneous coordinates, say $x_i=[y_i,1] \in \R^3, i \in [m]$.
In this manner, we can think of point-clouds $X$ as elements of $V=(\R^3)^m$.
Of course, multiplying all points in $X$ with a scalar $\lambda$ does not change the image of the points, we can regard $X$ as only being defined modulo $\R$, i.e., as an element of $\Proj(V)$. 
\end{example}

\begin{example} \label{ex:qm} 
    In quantum mechanics, the state of a particle is determined via a $\C$-valued \emph{wave-function}, that is an element $\psi$ of some Hilbert space $\calH$, with $\langle\psi,\psi\rangle = 1$ \citep{hallquantum}. Any measurable property $A$ (modeled through self-adjoint operators on $\calH$) of the particle is however determined through sesquilinear products (\emph{brackets}) of the form $\sprod{\psi \vert A\psi}$ which is not changed when $\psi$ is multiplied with a normalized scalar $\theta \in \C$. Physical observations can hence only reveal $\psi$ modulo such a scalar, and it should therefore be interpreted as an element of the projective space $\Proj(\calH)$.
\end{example}

The benefit of projective equivariance is eminent when trying to infer a projective feature transforming in a certain manner from projective data --- think for example of inferring the center of mass of a point cloud (which is equivariant to rigid motions) from a photo of the cloud. 
Motivated by these examples, we are interested in maps that fulfil the equivariance relation \eqref{eq:equivariance} up to a multiplicative scalar. To formalise this notion, let us begin by introducing some notation. $\PGL(V)$ is the \emph{projective general linear group}, i.e., the set of equivalence classes of maps $A\in \GL(V)$ modulo a multiplicative scalar. 
Note that an element $M \in \PGL(V)$ defines a projective-linear map $\Proj(V) \to \Proj(V)$.
\begin{definition}
    A \emph{projective representation} of a group $G$ on a projective space $\Proj(V)$ is a group homomorphism $\rho: G \to \PGL(V)$.
\end{definition}
Put more concretely, a projective representation is a map associating each $g \in G$ to an equivalence class $\rho(g)$ of invertible linear maps. $\rho$ respects the group structure in the sense of \eqref{eq:representation} --- however, only in the sense of elements in $\PGL(V)$, i.e., up to a multiplicative scalar.

\begin{example}\label{ex:projectedrep}
The simplest form of a projective representation is a projected linear one. That is, given a linear representation $\widehat{\rho} : G \to \GL(V)$, we immediately obtain a projective representation through $\rho = \Pi_{\GL(V)}\circ \widehat\rho$.
\end{example}
\begin{example}\label{ex:pinhole-rot}
$\SO(3)$ acts linearly on points in $\R^3$ by multiplication with the ordinary $3\times 3$ rotation matrices $R$.
The projection of this representation gives a projective representation on $\Proj(\R^3)$.
This projective representation acts on pinhole projected points $x\in\Proj(\R^3)$ by multiplying any vector $v\in \R^3$ in the equivalence class $x$ by the ordinary $3\times 3$ matrices $R$ and interpreting the result as lying in $\Proj(\R^3)$.
\end{example}


Not all projective representations are projections of linear ones, as shown by the next example.
\begin{example}\label{ex:spinrep}
    \citep{hallquantum}
    For every $\ell=0,1/2,1,3/2,\dots$, there is a complex vector space $\calV_{\ell}$ of dimension $2\ell+1$ on which the group $\mathrm{SU}(2)$ is acting by a linear representation. Together with the fact that $\mathrm{SU}(2)$ is a global double cover of $\mathrm{SO}(3)$, this can be used to define a map $\rho$ from $ \mathrm{SO}(3)$ to $\GL(\calV_\ell^n)$ up to a scalar, i.e., a projective representation. For integer $\ell$, this is a projection of a linear representation of $\mathrm{SO(3)}$, but for half-integers $1/2,3/2,\dots$, it is not. The parameter $\ell$ is known as the `spin' of a particle in quantum mechanics.
\end{example}

Given projective representations $\rho_0$, $\rho_1$ on $\Proj(V)$ and $\Proj(W)$, respectively, we can now state equivariance of a map $\Phi: \Proj(V) \to \Proj(W)$ exactly as before in \eqref{eq:equivariance}:
\begin{align} \label{eq:projequivariance}
    \rho_1(g) \Phi(v) = \Phi(\rho_0(g)v )\text{ for all $g \in G$, $v\in P(V)$}.
\end{align}
Note that we only demand that the equality holds in $\Proj(W)$ and \emph{not} necessarily in $W$, which was the case in \eqref{eq:equivariance}.
We refer to $\Phi$ satisfying \eqref{eq:projequivariance} as being \emph{projectively equivariant} with respect to $\rho_0$ and $\rho_1$.

\subsection{Projectively equivariant linear maps}

A canonical way to construct (linearly) equivariant neural networks is to alternate equivariant activation functions and equivariant linear layers. More concretely, if we let $V_0$ denote the input-space, $V_i$, $i=1,\dots, K-1$ the intermediate spaces and $V_K$ the output space, a multilayer perceptron is a function of the form
\begin{align*}
    \Phi(v) = v_K(v), \quad v_{k+1} = \sigma_{k}(A_kv_{k}), k=0, \dots, K-1.
\end{align*}
It is now clear that if $A_k$ and $\sigma_k$ all are equivariant, $\Phi$ will also be. This method easily generalizes to projective equivariance: one simply needs to restrict the equivariance condition to a projective equivariance one. In the following, we will refer to such nets as \emph{canonical}.

This construction motivates the question: Which linear maps $A: V \to W$ define projectively equivariant transformations $\Proj(V) \to \Proj(W)$?  If the projective representations $\rho_0$ and $\rho_1$ are projections of linear representations $\widehat{\rho}_0$ and $\widehat{\rho}_1$ respectively, we immediately see that every equivariant $A$ defines a projectively equivariant transformation. These are however not the only ones---an $A$ satisfying 
\begin{align}
    A\widehat{\rho}_0(g) = \varepsilon(g) \widehat{\rho}_1(g) A, \quad g \in G \label{eq:epsilonequivariance}
\end{align}
for some function $\varepsilon: G \to \F$ will also suffice.
For such a relation to hold for non-zero $A$, $\varepsilon$ needs to be a (continuous) \emph{group homomorphism}. The set of these $\varepsilon$, equipped with pointwise multiplication, form the so-called \emph{character group}.
\begin{definition}
    Let $G$ be a topological group. The set of continuous group homomorphisms $\varepsilon: G \to \F$ is called the \emph{character group} of $G$, and is denoted $G^*$.
\end{definition}
\begin{remark}
Given an $\varepsilon \in G^*$ and a linear representation $\widehat{\rho}$ of $G$, we may define a new representation $\widehat{\rho}^\varepsilon$ through $\widehat{\rho}^\varepsilon(g) = \varepsilon(g) \widehat{\rho}(g)$. Hence, \eqref{eq:epsilonequivariance} means that $A$ is linearly equivariant w.r.t. representations $\widehat\rho_0$ and $\widehat\rho_1^\epsilon$.
\end{remark}

The condition \eqref{eq:epsilonequivariance} is surely \emph{sufficient} for the map $A$ defining a projectively equivariant linear map. In the following, we will prove that in many important cases, it in fact also is \emph{necessary}.
To make our considerations general enough to include examples like Example~\ref{ex:spinrep}, we will not assume that $\rho$ is a projection of a $\widehat{\rho}$, but only that we can `lift' $\rho$ to a linear representation of a so-called covering group
 of $G$. We need the following two definitions.
 
 \begin{definition}{\citep[Def. 5.36]{Hall2015}}
    Let $G$ be a topological group. A group $H$ is called a \emph{covering group} of $G$ if there exists a \emph{group covering} $\varphi: H \to G$, i.e. a surjective continuous group homomorphism  which maps some neighbourhood of the unit element $e_H \in H $ to a neighbourhood of the unit element $e_G \in G$ homeomorphically.
\end{definition}
\begin{definition}
    Let $\rho:G\to \PGL(V)$ be a projective representation, and $H$ a covering group of $G$, with group covering $\varphi$. A \emph{lift} of $\rho$ is a linear representation $\widehat{\rho}: H \to \GL(V)$ with $
        \rho \circ \varphi = \Pi_{\GL(V)} \circ \widehat{\rho}$.
 \end{definition}
 
\begin{example}
    If $\rho : G \to \PGL(V)$ is a projection of a linear representation $\widehat{\rho}:G \to \GL(V)$ as in Example~\ref{ex:projectedrep}, then $\widehat{\rho}$ is a lift of $\rho$ (the covering group is simply $G$ itself).
\end{example}

\begin{example}
    The linear representations related to spin discussed in Example~\ref{ex:spinrep} are defined on $\SU(2)$, which is a covering group of $\SO(3)$. The linear representations are lifts of the projective representations of $\SO(3)$ discussed there.
\end{example}

\begin{remark}
Given a projective representation $\rho$ of a group $G$, there is a canonical way of constructing a covering group $H$ (dependent on $\rho)$ and linear representation $\widehat{\rho}$ of $H$ which lifts $\rho$. This construction is however mainly of theoretical interest, and our subsequent results will not give us much insight when this lift is used. We discuss this further in Appendix \ref{sec:app-canonicallift}.
\end{remark}

Let us first reformulate the equivariance problem slightly as is routinely done in the linear case \citep{weiler_e2_2019, finzi2021practical}.
Given projective representations $\rho_0$ and $\rho_1$ on $\Proj(V)$ and $\Proj(W)$, we may define a projective representation $\rho: G \to \PGL(\Hom(V,W))$ on the space $\Hom(V,W)$ of linear maps between $V$ and $W$, through
\begin{align*}
    \rho(g) A = \rho_1(g)A \rho_0^{-1}(g).
\end{align*}
On the left hand side we use $\rho(g)$ to transform $A\in\Proj(\Hom(V,W))$ to obtain a new element in $\Proj(\Hom(V,W))$.
On the right hand side we write this new element out explicitly---it is a composition of the three maps $\rho_0^{-1}(g)$, $A$ and $\rho_1(g)$.
Just as in the linear case, we can reformulate projective equivariance of an $A \in \Hom(V,W)$ as an \emph{invariance} equation under $\rho$.
\begin{lemma} \label{lem:EquivIsInv}
    A linear map $A:V \to W$ is projectively equivariant if and only if $A$ is invariant under $\rho$, that is 
     $   \rho(g)A=A \text{ for all $g \in G$. }$
    These 
    equations are understood in $\Proj(\Hom(V,W))$.
\end{lemma}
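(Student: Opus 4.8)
The plan is to unwind the definition of projective equivariance for the transformation induced by $A$, recast it as the statement that $A$ and a twisted copy of it induce the same projective-linear map, and then invoke the uniqueness (up to a scalar) of the linear representative of a projectivity. First I would fix, for each $g$, representatives $\widehat{\rho}_0(g)\in\GL(V)$ and $\widehat{\rho}_1(g)\in\GL(W)$ of $\rho_0(g)$ and $\rho_1(g)$. Since $A$ induces a (total) transformation $\Proj(V)\to\Proj(W)$ it is injective, and the induced map is $\Pi_V(v)\mapsto\Pi_W(Av)$. Substituting this into \eqref{eq:projequivariance} and expressing the action of $\rho_0(g)$ through its representative, projective equivariance becomes the requirement that $\Pi_W(\widehat{\rho}_1(g)Av)=\Pi_W(A\widehat{\rho}_0(g)v)$ for all $g\in G$ and all $v\neq 0$. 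After the change of variables $w=\widehat{\rho}_0(g)v$, this says that the two linear maps $A$ and $A_g:=\widehat{\rho}_1(g)A\widehat{\rho}_0(g)^{-1}$ satisfy $\Pi_W(A_g w)=\Pi_W(Aw)$ for all $w$, i.e. they induce the same projective-linear map.

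The direction ($A$ invariant $\Rightarrow$ $A$ projectively equivariant) is the easy one: invariance of $A$ under $\rho$ means exactly that $A_g=\mu(g)A$ in $\Hom(V,W)$ for some $\mu(g)\in\F\setminus\{0\}$, and projecting this identity kills the scalar and immediately returns $\Pi_W(\widehat{\rho}_1(g)Av)=\Pi_W(A\widehat{\rho}_0(g)v)$, which is \eqref{eq:projequivariance}.

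The substantial direction is the converse, whose key step is the linear-algebra fact that if two linear maps $A,A_g:V\to W$ induce the same transformation $\Proj(V)\to\Proj(W)$ (so $A_g w$ is proportional to $Aw$ for every $w$) and $\mathrm{rank}\,A\ge 2$, then $A_g=\mu A$ for a single scalar $\mu$ independent of $w$. Granting this, $A_g=\mu(g)A$ is precisely the assertion $\rho(g)A=A$ in $\Proj(\Hom(V,W))$, so $A$ is $\rho$-invariant. The main obstacle is exactly this upgrade of the pointwise proportionality constant, which a priori depends on $w$, to a global one. I would fix $v_0$ with $Av_0\neq 0$, define $\mu$ by $A_g v_0=\mu Av_0$, and then show $A_g v=\mu Av$ for every $v$: when $Av$ and $Av_0$ are linearly independent this follows by applying the proportionality hypothesis to $v_0$, $v$ and $v_0+v$ and matching coefficients, while the remaining $v$ (those with $Av$ parallel to $Av_0$) are handled by comparing against a third vector whose image is independent of $Av_0$, which exists because $A$ is injective with $\dim V\ge 2$, so $\mathrm{rank}\,A\ge 2$.

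This rank hypothesis is where injectivity of $A$ enters and is genuinely needed: for rank-one maps pointwise proportionality does \emph{not} force global proportionality, so the equivalence would break down without the standing assumption that $A$ defines a total transformation $\Proj(V)\to\Proj(W)$ (the degenerate case $\dim V=1$ being checked directly, where both conditions reduce to $\widehat{\rho}_1(g)$ fixing the line $AV$). Finally, I note that the lemma asserts only the set-theoretic equivalence of the two conditions and does not require $g\mapsto\mu(g)$ to be continuous or multiplicative; those properties are established separately when $\mu$ is later identified with a character in $G^*$.
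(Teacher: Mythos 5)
Your proof is correct and follows essentially the same route as the paper's: both reduce projective equivariance to the statement that $A_g=\widehat{\rho}_1(g)A\widehat{\rho}_0(g)^{-1}$ is pointwise proportional to $A$, and both upgrade the pointwise scalar $\lambda_v$ to a global one via the same eigenvector-matching trick applied to $v$, $w$ and $v+w$. The only difference is cosmetic: the paper reduces to a square-operator eigenvector claim by composing with the inverse of $A$ restricted to the complement of its kernel, whereas you argue directly on $A$ and $A_g$ and are more explicit about the injectivity/rank hypothesis needed in the converse direction (a point the paper instead handles by imposing the proportionality condition on the kernel as well, forcing $A_g$ to vanish there).
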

 The proof, which in contrast to the nonlinear case necessitates a non-trivial technical effort, can be found in the appendix. The key is that the equivariance condition for $A$ means that any $x$ solves the `eigenvalue problem' $\rho_1(g)Ax=\lambda_{x,g} A\rho_0(g)x$, from which $\rho(g)A=A$ in $\Proj(\Hom(V,W))$ can be deduced. The lemma implies that we can concentrate on invariance equations
 \begin{align} \label{eq:inv}
     \rho(g)v = v \text{ for all } g \in G\tag{$\mathrm{Proj}_{\groupG}$}
 \end{align}
 which, importantly, are understood in $\Proj(\mathcal{V})$ for some general vector space $\mathcal{V}$ (which covers the case $\mathcal{V}=\Hom(V,W)$). With this knowledge, the  main result of this section is relatively easy to show. It says that given a lift $\widehat{\rho}$ of $\rho$, the solutions of projective invariance equations \eqref{eq:inv} of $\rho$ are exactly the ones of the  $\varepsilon$-linear invariance equations of $\widehat\rho$ w.r.t. the covering group $H$,
\begin{align}
    \widehat{\rho}(h)v = \varepsilon(h)v \text{ for all $h \in H$},\label{eq:liftinv} \tag{$\mathrm{Lin}_{\groupH}^\varepsilon$}
\end{align}
where $\varepsilon$ ranges the whole of $H^*$. Note in particular that ${\renewcommand{\varepsilon}{1}\eqref{eq:liftinv}}$ is nothing but the standard linear invariance problem for $\widehat\rho$. For a given $\varepsilon\in H^*$, we denote the space of solutions to \eqref{eq:liftinv} by $U^\varepsilon$.

\begin{theorem} \label{th:main}
    Let $G$ be a group and $H$ a covering group of $G$ with covering map $\varphi$. Further, let $\rho:G\to\PGL(\calV)$ be a projective representation of $G$ and $\widehat\rho:H\to\GL(\calV)$ a lift of $\rho$.
    Then, the following are equivalent
$$ (i) \text{ $v$ solves \eqref{eq:inv}.} \qquad (ii)
 \text{ $v$ is the equivalence class of some $x \in U^\varepsilon$, \ $\varepsilon \in H^*.$ } $$
\end{theorem}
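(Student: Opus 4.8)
The plan is to prove the two implications separately, each one resting on the defining property of the lift, $\rho\circ\varphi = \Pi_{\GL(V)}\circ\widehat\rho$.

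For the easier direction $(ii)\Rightarrow(i)$, I would fix a representative $x$ of the class $v$ lying in some $U^\varepsilon$, so that $\widehat\rho(h)x = \varepsilon(h)x$ for every $h\in H$. Given an arbitrary $g\in G$, surjectivity of the covering map $\varphi$ supplies an $h\in H$ with $\varphi(h)=g$. The lift property then gives $\rho(g) = \Pi_{\GL(V)}(\widehat\rho(h))$, so acting on the projective point yields $\rho(g)x = \Pi_V(\widehat\rho(h)x) = \Pi_V(\varepsilon(h)x) = \Pi_V(x)$, the last equality because $\varepsilon(h)\in\F\setminus\{0\}$. Hence $\rho(g)$ fixes the projective point $\Pi_V(x)=\Pi_V(v)$ for every $g$, which is exactly \eqref{eq:inv}.

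For $(i)\Rightarrow(ii)$, I would take a fixed nonzero representative $v\in V$ and manufacture the required character directly from it. For each $h\in H$, set $g=\varphi(h)$; the lift property turns the projective equation $\rho(g)v=v$ into the assertion that $\widehat\rho(h)v$ and $v$ are proportional, i.e.\ $\widehat\rho(h)v = \mu(h)\,v$ for a unique scalar $\mu(h)$, which is nonzero because $\widehat\rho(h)$ is invertible and $v\neq 0$. Granting that the resulting $\mu:H\to\F\setminus\{0\}$ belongs to $H^*$, the vector $v$ itself then solves \eqref{eq:liftinv} with $\varepsilon=\mu$, so $v\in U^\mu$ and $(ii)$ holds with $x=v$.

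The crux is therefore verifying that $\mu$ is a continuous homomorphism. Multiplicativity is formal: comparing $\widehat\rho(h_1h_2)v = \mu(h_1h_2)\,v$ with $\widehat\rho(h_1)\widehat\rho(h_2)v = \mu(h_1)\mu(h_2)\,v$ and cancelling $v\neq 0$ gives $\mu(h_1h_2)=\mu(h_1)\mu(h_2)$. Continuity I would obtain by choosing a coordinate index $i$ with $v_i\neq 0$ and writing $\mu(h) = (\widehat\rho(h)v)_i/v_i$, which is continuous since $\widehat\rho$ is. I expect this bookkeeping — that $\mu$ is well defined, multiplicative, and continuous — to be the only genuine content of the argument; the structural identities drop out immediately from the lift property, and the single use of the covering hypothesis is the appeal to surjectivity of $\varphi$ in the first direction.
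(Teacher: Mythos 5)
Your proposal is correct and follows essentially the same route as the paper's proof: the forward direction uses surjectivity of $\varphi$ together with the lift identity exactly as in the paper, and the converse direction extracts the scalar function $\mu$ from proportionality, then verifies multiplicativity by cancelling $v\neq 0$ and continuity from continuity of $\widehat\rho$ (your coordinate-functional argument just makes explicit what the paper asserts in one line). No gaps.
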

\begin{proof}
    To prove that $(ii) \Rightarrow (i)$, note that $\Pi_\calV(Mv) = \Pi_{\GL(\calV)}(M) \Pi_\calV(v)$ for $M \in \GL(\calV)$ and $v \in \calV$.
    Let $x$ solve \eqref{eq:liftinv}.
 For any $g\in G$, there is a $h$ such that $\varphi(h)=g$ and hence
  \[
    \rho(g)\Pi_\calV(x) = \rho(\varphi(h))\Pi_\calV(x) = \Pi_{\GL(\calV)}(\widehat\rho(h))\Pi_\calV(x)
    = \Pi_\calV(\widehat\rho(h)x) = \Pi_\calV(\varepsilon(x)x)=\Pi_\calV(x),
  \]
  meaning that the equivalence class of $x$ solves \eqref{eq:inv}.

  To prove that $(i)\Rightarrow (ii)$,  let $\Pi_\calV(x)$ be a solution of \eqref{eq:inv} for some $x\neq 0$. $\widehat{\rho}(h)x$ must then for all $h$ lie in the subspace spanned by $x$---in other words, there must exist a map $\lambda: H \to \F$ with $\widehat{\rho}(h)x= \lambda(h)x$. Since $\lambda(h)\lambda(k)x = \widehat{\rho}(h)\widehat{\rho}(k)x = \widehat{\rho}(hk)x = \lambda(hk)x$, $\lambda$ must be a group homomorphism.  Also, since $\widehat{\rho}$ is continuous, $\lambda$ must also be.  
    That is, $\lambda \in H^*$, and $x \in U^\lambda$.
\end{proof}

\subsection{The structure of the spaces \texorpdfstring{$U^\varepsilon$}{U-epsilon}}
\label{sec:structure}
Let us study the spaces $U^\varepsilon$. Before looking at important special cases, which will reveal the important consequences of Theorem \ref{th:main} discussed in the introduction, let us begin by stating two properties that hold in general.
\begin{proposition} \label{prop:structure}
    (i) If $\calV$ is finite dimensional and $H$ is compact, $U^\varepsilon$ is only non-trivial if $\varepsilon$ maps into the unit circle.
    
    (ii) The $U^\varepsilon$ are contained in the space $U_{HH}$ of solutions of the linear invariance problem
        \begin{align}
        \widehat{\rho}(h)v = v \text{ for all $h \in \{H,H\}$},\label{eq:redinv} \tag{$\mathrm{Lin}_{\{\groupH,\groupH\}}^1$}
\end{align}
of the \emph{{commutator} subgroup} $\{H,H\}$, i.e. the subgroup generated by {commutators} $\{h,k\}=hkh^{-1}k^{-1}, h,k \in H$. If $\F=\C$, $\calV$ is finite dimensional and $H$ is compact, $U_{HH}$ is even the direct sum of the $U^\varepsilon$, $\varepsilon \in H^*$.
\end{proposition}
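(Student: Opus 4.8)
The plan is to handle the two parts using the unitarization afforded by compactness. For part~(i), I would average an arbitrary inner product on $V$ over the Haar measure of $H$ to obtain an $H$-invariant inner product, with respect to which every $\widehat\rho(h)$ is unitary; this standard averaging argument is available because $V$ is finite dimensional and $H$ is compact. A nonzero $v\in U^\varepsilon$ is then an eigenvector of the norm-preserving operator $\widehat\rho(h)$ with eigenvalue $\varepsilon(h)$, so $\abso{\varepsilon(h)}\,\norm{v}=\norm{\widehat\rho(h)v}=\norm{v}$ forces $\abso{\varepsilon(h)}=1$ for every $h\in H$. Hence $U^\varepsilon$ can only be nontrivial when $\varepsilon$ takes values in the unit circle.

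The inclusion $U^\varepsilon\subseteq U_{HH}$ of part~(ii) is purely algebraic and requires no topological hypotheses. For $v\in U^\varepsilon$ and a commutator $\{h,k\}=hkh^{-1}k^{-1}$, repeatedly applying $\widehat\rho(h)v=\varepsilon(h)v$ gives $\widehat\rho(hkh^{-1}k^{-1})v=\varepsilon(h)\varepsilon(k)\varepsilon(h)^{-1}\varepsilon(k)^{-1}v=v$, the scalars cancelling because $\F$ is commutative. Since such commutators generate $\{H,H\}$ and $\widehat\rho$ is a homomorphism, every element of $\{H,H\}$ fixes $v$, i.e.\ $v\in U_{HH}$.

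For the direct-sum statement over $\F=\C$, I would first note that $U_{HH}$ is $\widehat\rho(H)$-invariant: normality of $\{H,H\}$ in $H$ gives $h^{-1}ch\in\{H,H\}$ for any commutator $c$, so $\widehat\rho(c)\widehat\rho(h)v=\widehat\rho(h)\widehat\rho(h^{-1}ch)v=\widehat\rho(h)v$ for $v\in U_{HH}$. The restricted operators $\widehat\rho(h)|_{U_{HH}}$ moreover commute, since their commutator equals $\widehat\rho(hkh^{-1}k^{-1})$, which acts as the identity on $U_{HH}$. Averaging an inner product over $H$ again makes each of them unitary, so $\{\widehat\rho(h)|_{U_{HH}}\}_{h\in H}$ is a commuting family of unitary---hence diagonalizable---operators on the finite-dimensional complex space $U_{HH}$, and is therefore simultaneously diagonalizable. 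Writing $U_{HH}$ as the direct sum of the common eigenspaces, on each eigenspace $E$ every $\widehat\rho(h)$ acts as a scalar $\mu(h)$; multiplicativity of $\widehat\rho$ makes $\mu$ a homomorphism and continuity of $\widehat\rho$ makes it continuous (as $\mu(h)=\sprod{\widehat\rho(h)e,e}/\sprod{e,e}$ for a fixed $0\neq e\in E$), so $\mu\in H^*$. Using the inclusion of part~(ii), $E=U^\mu$, and collecting the eigenspaces yields $U_{HH}=\bigoplus_{\varepsilon\in H^*}U^\varepsilon$.

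The main obstacle is the simultaneous diagonalization of the possibly infinite commuting family $\{\widehat\rho(h)|_{U_{HH}}\}$, and it is exactly here that all three hypotheses are used together: compactness supplies the invariant inner product that makes each operator normal, algebraic closedness of $\C$ puts the eigenvalues in the field, and finite dimensionality guarantees that the common refinement of the individual eigenspace decompositions terminates in a genuine simultaneous eigenbasis. I would discharge this either by the elementary fact that a commuting family of diagonalizable operators on a finite-dimensional space admits a common eigenbasis, or by invoking the Peter--Weyl decomposition of the finite-dimensional unitary representation of $H$ that factors through its (compact abelian) abelianization.
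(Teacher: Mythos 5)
Your proposal is correct and follows essentially the same route as the paper's proof: Haar-averaging to unitarize $\widehat\rho$ and the eigenvalue-modulus argument for (i), the commutator cancellation for the inclusion in (ii), and then invariance of $U_{HH}$ under $\widehat\rho(H)$, commutativity of the restricted operators, and simultaneous diagonalization of the commuting unitaries with identification of the eigenvalue functions as elements of $H^*$ for the direct sum. The only cosmetic differences are that you invoke normality of $\{H,H\}$ where the paper writes out the identity $kh=\{k,h\}hk$ explicitly, and that you supply an explicit continuity formula for the eigenvalue functions where the paper points back to the argument in its main theorem.
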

The proofs require slightly more involved mathematical machinery than above. 
The idea for $(i)$ is that the $\widehat{\rho}$ in this case can be assumed unitary.
The ``$\subseteq$'' part of $(ii)$ is a simple calculation, whereas we for the ``direct sum'' part need to utilize the simultaneous diagonalization theorem for commuting families of unitary operators.
The details are given in Appendix \ref{sec:app-structure}.

We move on to discuss the structure of the spaces $U^\varepsilon$ for three relevant cases. We will make use of a few well-known group theory results ---  for convenience, we give proofs of them in \ref{sec:app-grouptheory}.

\myparagraph{The rotation group} $\SO(3)$ is a \emph{perfect} group, meaning that $\SO(3)$ is equal to its commutator subgroup $\{\SO(3),\SO(3)\}$. 
This has the consequence that $\SO(3)^*$ only contains the trivial character $1$, which trivially shows that {\renewcommand{\groupG}{\SO(3)}\eqref{eq:inv} is equal to the space $U^1$. In other words, we have the following corollary.
\begin{corollary}\label{cor:rotequiv}
For projections of linear representations $\widehat{\rho}: \SO(3)\to \GL(\calV)$, {\renewcommand{\groupG}{\SO(3)}\eqref{eq:inv}} is equivalent to the linear equivariance problem {\renewcommand{\groupH}{\SO(3)}\renewcommand{\varepsilon}{1} \eqref{eq:liftinv}}
\end{corollary} 

This means that if one wants to construct a canonical projectively equivariant neural net (Example~\ref{ex:cameramodel}), one either has to settle with the already available linearly equivariant ones, or construct a non-linearity that is projectively, but not linearly equivariant. To construct such non-linearities is interesting future work, but unrelated to the linear equivariance questions we tackle here, and hence out of the scope of this work.

The  pinhole camera representation is a projection of a linear representation of $\SO(3)$. Not all projective representations are, though ---they can also be linear projections of representations of $\SU(2)$. Since $\SU(2)$ also is perfect, there is again an equivalence, but only on the $\SU(2)$-level. Hence, in general \emph{the projective invariance problem for the $\SO(3)$-representation is not necessarily equivalent to a linear equivariance problem {\renewcommand{\groupH}{\SO(3)}\renewcommand{\varepsilon}{1} \eqref{eq:liftinv}}, but always to one of the form {\renewcommand{\groupH}{\SU(2)}\renewcommand{\varepsilon}{1} \eqref{eq:liftinv}}}.

\myparagraph{The permutation group}  The behaviour of $\SO(3)$ is in some sense very special. More specifically, it can be proven that if $G$ is compact, $G^*=\{1\}$ only when $G$ is perfect (see Section \ref{sec:app-grouptheory}). However, equivalence of {\eqref{eq:inv}} and\eqref{eq:liftinv}} can still occur when $G^*$ is not trivial.

An example of this is the permutation group $S_n$ acting on spaces of tensors. The character group $S_n^*$ of the permutation group contains two elements: $1$ and the sign function $\sgn$. This means that the solutions to ${\renewcommand{\groupG}{S_n}\eqref{eq:inv}}$ form two subspaces $U^1$ and $U^\sgn$. However, in the important case of $\rho$ being a projection of the canonical representation $\widehat{\rho}$ of $S_n$ on the tensor space $(\F^n)^{\otimes k}$ defined by $(\widehat{\rho}(\pi)x)_I = x_{\pi^{-1}(I)}$ for any multi-index $I \in [n]^k$
, something interesting happens: $U^{\sgn}$ is empty for $n \geq k+2$. We prove this in Section \ref{sec:app-permsgn}. In particular, we can draw another interesting corollary.
\begin{corollary}
    For the canonical action of $S_n$ on $\calV=(\F^n)^k$ for $n\geq k+2$, {\renewcommand{\groupG}{S_n}\eqref{eq:inv}} is equivalent to the linear equivariance problem {\renewcommand{\groupH}{S_n}\renewcommand{\varepsilon}{1} \eqref{eq:liftinv}}.
\end{corollary}
In more practical terms, this means that unless tensors of very high order $(k \leq n-2)$ are used, we cannot construct canonical architectures operating on tensors that are projectively, but not linearly, equivariant to permutations without designing non-standard equivariant non-linearites.

\myparagraph{The translation group}
   $\Z_n^*$ is isomorphic to the set of $n$:th roots of unity: every root of unity $\omega$ defines a character through $\varepsilon_\omega(k)=\omega^k, k \in \Z_n$. Furthermore, for the standard linear representation on $\F^n$, each of the spaces $U^{\varepsilon_{\omega}}$ is non trivial---it contains the element defined through $v_k= \omega^{-k}$: $(\rho(\ell)v)_k = v_{k-\ell} = \omega^{\ell-k} = \varepsilon_\omega(\ell)v_k$. In the case $\F=\C$, this leaves us with $n$ linear spaces of protectively invariant elements---which are not hard to identify as the components of the Fourier transform of $v$. If $\F=\R$, the number of roots of unity depend on $n$ --- if $n$ is odd, there are two roots $\{+1,-1\}$, but if $n$ is even, the only root is $+1$. In the latter case, we thus again have an equivalence between {\renewcommand{\groupG}{\Z_n}\eqref{eq:inv}} and {\renewcommand{\groupH}{\Z_n}\renewcommand{\varepsilon}{1}\eqref{eq:liftinv}}.

\section{A projectively equivariant architecture} \label{sec:projarc}

Theorem \ref{th:main} is mainly a result which restricts the construction of equivariant networks. Still, we can use its message to construct a new, projectively equivariant architecture as follows. We describe the architecture as a series of maps $V_k \to V_{k+1}$ between finite-dimensional spaces $V_k$. Theorem \ref{th:main} states that all projectively equivariant linear maps $V_0 \to V_1$ lie in some space $U^\varepsilon$. Hence, 
for each $\varepsilon \in H^*$, we  may first multiply the input features $v \in V_0$ with maps $A_0^\varepsilon \in U^\varepsilon$, to form new, projectively equivariant features.
By subsequently applying an equivariant non-linearity $\sigma_0: V_1 \to V_1$, we obtain the $V_1$-valued features
\begin{align} \label{eq:network1}
    v_1^\varepsilon = \sigma_0(A_0^\varepsilon v) , \quad A_0^\varepsilon \in W_0^\varepsilon, \varepsilon \in H^*.
\end{align}
It is not hard to see that as soon as the non-linearity $\sigma_0$ commutes with the elements $\varepsilon(h)$, $h\in H$, the features $v_1^\varepsilon$ will be projectively equivariant. Note that by Proposition \ref{prop:structure}, since the $V_i$ are finite dimensional and $H$ is compact, the values $\epsilon(g)$ will always lie on the unit circle. Hence, we can use $\tanh$-nonlinearity in the real case, or the \texttt{modReLU} \citep{arjovsky2016unitary} in the complex-valued one.

Also note that we here get one feature for each $\varepsilon \in H^*$, so we map from
$V_0$ to a $|H^*|$-tuple of $V_1$'s. $U^\varepsilon$ is however nontrivial only for finitely many $\varepsilon$ -- this is a consequence of  Proposition \ref{prop:structure} and the finite-dimensionality of the $V_k$. Said proposition also shows that  the sum of their dimensions does not exceed $\dim(U_{HH})$.
Hence, each tuple $(A_k^\varepsilon)_{\varepsilon \in H^*}$ can be represented with no more than $\dim(U_{HH})\leq \dim(V_k) \cdot \dim(V_{k+1})$ scalars.

The features $v_1^\varepsilon$ can be combined with new linear maps $A_1^\gamma$, $\gamma \in H^*$ to form new features as follows:
\begin{align} \label{eq:network2}
    v_2^\varepsilon = \sigma_1 \bigg(\sum_{\substack{\gamma, \delta \in H^*, \gamma \delta = \varepsilon}} A_1^\gamma v_1^\delta \bigg).
\end{align}
This process can continue for $k=3, 4, \dots,K-1$  until we arrive at projectively equivariant $V_K$-valued features. We record that as a theorem, which we formally prove in Appendix \ref{sec:app-proofs}. 
\begin{theorem} \label{th:eqfeatures}
    If the $\sigma_i$ commute with the elements $\varepsilon(g)$, $\varepsilon \in H^*$, $h \in H$, the features $v_k^\varepsilon$ are projectively equivariant.
\end{theorem}

\begin{figure}

    \centering
    \includegraphics[height=3.5cm]{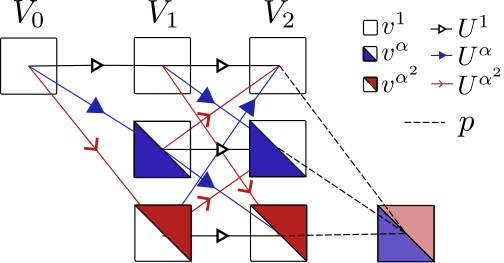}
    \caption{The structure of a three-layer projectively equivariant net for $H=\Z_3$.
    $\Z_3^*$ consists of three elements of the form $\epsilon(k)=\theta^k$, where $\theta=1,\alpha,\alpha^2$, $\alpha=e^{2\pi i/3}$---we write $\Z_3^* = \{1,\alpha,\alpha^2\}$.
    The features $v^\delta$ are  combined with linear operators in $U^{\gamma}$ to produce new features $v^\varepsilon$, with $\gamma,\delta, \epsilon \in \Z_3^*$.
    In a final step, the features of the $\varepsilon$-indexed tuples are linearly combined with a 'selector' $p$ to yield one single output.
    Best viewed in color.}
    \label{fig:homographnet}
    
\end{figure}
Also, the above construction inherently produces $\vert H^* \vert$ output features $v_K^\varepsilon \in V_K$. If we want a single output feature, we propose to learn a `selection-vector' $p \in \F^{|H^*|}$, that linearly decides which feature(s) to `attend' to:
\begin{align*}
    w = \sum_{\varepsilon \in H^*} p_\varepsilon v^\varepsilon_K.
\end{align*}
Technically, this does not yield an exactly equivariant function unless all but one entry in $p$ is equal to zero. Still, we believe it is a reasonable heuristic. In particular, we cannot a priori pick one of the $v_K^\varepsilon$ as output, since that fixes the linear transformation behaviour of the network. For instance, choosing $v^1_K$ leads to a linearly equivariant network, not able to capture non-trivial projective equivariance. Note also that we may employ sparse regularization techniques to encourage sparse $p$, e.g. to add an $\ell_1$-term to the loss. We provide a graphical depiction of our approach for the (simple but non-trivial) case of $H=\Z_3$ in Figure \ref{fig:homographnet}.

\subsection{Relation to earlier work} \label{sec:earlierwork} As described above, once we fix $\varepsilon$, \eqref{eq:epsilonequivariance} is a linear equivariance condition on the linear map $A$.
Hence the construction proposed here is equivalent to the following linear equivariance approach.
For each feature space $V_k$, $k>0$,
duplicate it $|H^*|$ times to form a new feature space $\widetilde V_k = \bigoplus_{\varepsilon\in H^*}V_k$ and
select the linear representation $\bigoplus_{\varepsilon\in H^*}\widehat\rho^\varepsilon$ to
act on $\widetilde V_k$.
Then parametrize the neural network by linearly equivariant maps $\widetilde V_k \to \widetilde V_{k+1}$.
Depending on the group $H$ and the spaces $\widetilde V_k$ this can be done by various methods found in the literature \citep{cohenGroupEquivariantConvolutional2016, finzi2021practical, cesaprogram, weiler_e2_2019}. In particular, if $U^\varepsilon$ is empty for $\varepsilon \neq 1$, or if there are no non-trivial $\varepsilon$, this construction will be exactly the same as the standard linearly equivariant one.
This is for instance always the case for $H=\SO(3)$ as explained leading up to Corollary~\ref{cor:rotequiv}.

\subsection{A possible application: Class-dependent symmetries}
\label{sec:vierer}

Let us sketch a somewhat surprising setting where our projectively equivariant network can be applied: Classification problems which exhibit \emph{class-dependent} symmetries.
    
For clarity, let us consider a very concrete example: The problem of detecting `T'-shapes in natural images. We may phrase this as learning a function $p_{\text{T}}: \R^{n, n} \to [0,1]$ giving the probability that the image $v\in \R^{n,n}$ contains a `T'. This probability is not changed when the image is translated, or when it is horizontally flipped. The same is not true when flipping the image vertically---an image containing a `T' will instead contain a `$\perp$' symbol. In other words, $p_{\text{T}}(v) \approx 1 \Rightarrow p_{\text{T}}(v_{\text{ver. flip}})\approx 0$. The action of the two flips, which commute, can be modelled with the help of \emph{Klein's Vierergroup}\footnote{`Vier' is German for `four', referring to the number of elements in the group.} $\Z_2^2$---$(1,0)$ corresponds to the vertical flip, and $(0,1)$ to the horizontal one.

\begin{figure}
    \centering
    \begin{minipage}[c]{0.35\textwidth}
    \centering
    \includegraphics[width=.7\textwidth]{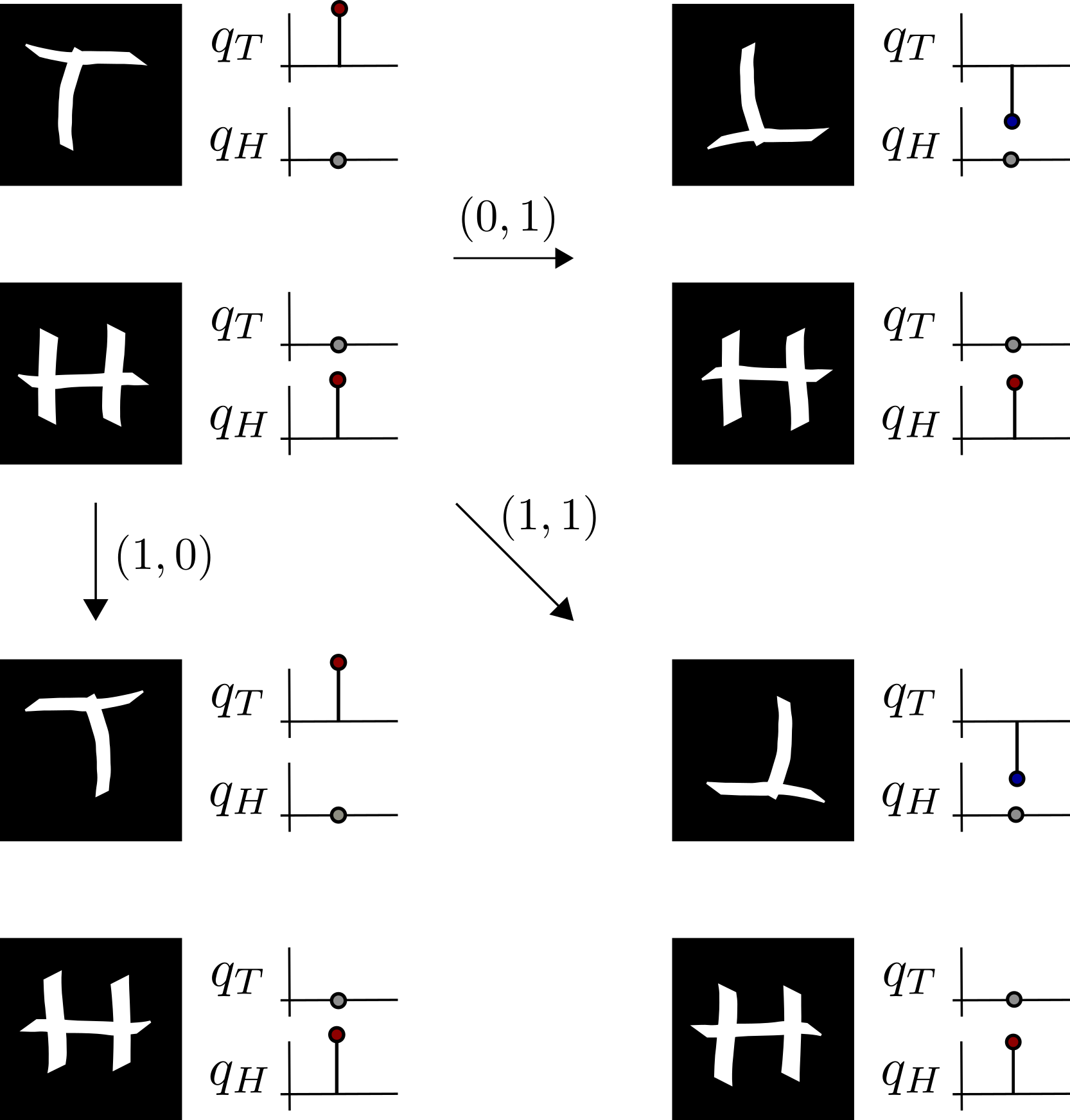}
    \end{minipage}
    \vrule
    \begin{minipage}[c]{0.6\textwidth}
    \centering
        \includegraphics[width=.49\textwidth]{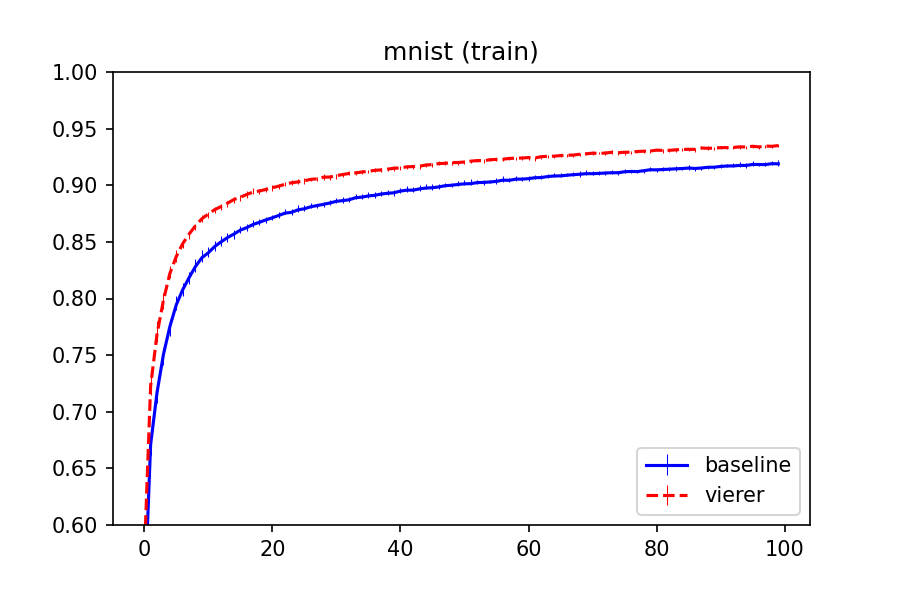}
        \includegraphics[width=.49\textwidth]{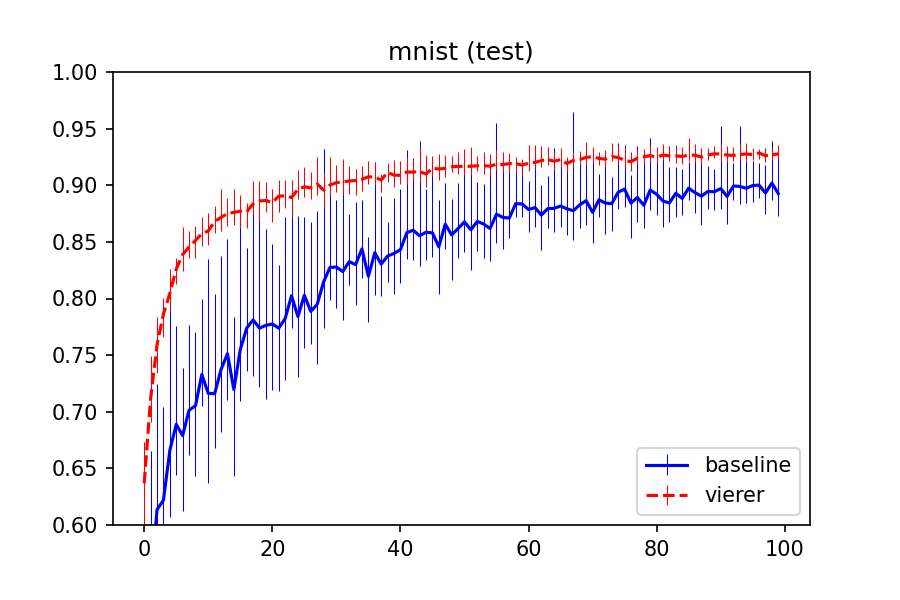}
    \end{minipage}
    \caption{Left: The group action of the example in Section~\ref{sec:vierer}. Right: Training (left) and test (right) accuracy for the MNIST experiments. The lines report the median performance for each epoch, wheras the errorbars depict confidence intervals of 80$\%$.
    \label{fig:mnist_res} \label{fig:viereraction}}
\end{figure}

    Let us write $p_{\text{T}}=p_0e^{q_{\text{T}}}$, where $p_0$ is some small number. If $v$ is an image neither containing a `T' or a `$\perp$', $p_{\text{T}}(v)$ should be small --- letting $q_{\text{T}}(v)=0$ suffices in this case. If $v$ instead is an image containing a `T', $q_{\text{T}}(v)$ should be large, and $q_T(v_{\text{ver. flip}})$ should be very small. This is satisfied by functions $q$ with $q(v_{\text{ver. flip}})=-q(v)$. Hence, $q_\text{T}$ should modulo a multiplication with $-1$ not change by any flip --- that is, $q_\text{T}$ should be projectively invariant. Note that `projectively invariant' here a priori is too unrestrictive ---  it formally means $q_T(v_\text{ver. flip})=\lambda q_T(v)$ for arbitrary $\lambda\neq0$, which any function that is non-zero for both $v_\text{ver. flip}$ and $v$ satisfies.    As we have seen, in our architecture, the scalar $\lambda$ can however only be equal to $\pm 1$, which is the behaviour we search for.

    The exact same reasoning can be applied to the probability $p_{\text{H}}$ of an image containing a `H'. In this case, however, $p_{\text{H}}$ --- or equivalently, $q_{\text{H}}$ --- is linearly, and therefore projectively, invariant of either flip. Put differently, $q_{\text{H}}$ and $q_{\text{T}}$ are linearly equivariant with respect to different representations $\rho^\varepsilon$, and therefore both projectively equivariant w.r.t. the same projective representation $\rho$. See also Figure \ref{fig:viereraction}.

    Now imagine training both $q_{\text{H}}$ and $q_{\text{T}}$ on a dataset containing both `T':s and `H':s. If we were to use a canononical linearly equivariant  architecture (with respect to a single $\rho^\varepsilon$),   we could not expect a good performance. Indeed, one of the functions would necessarily not have the proper transformation behaviour. In contrast, our architecture has the chance --- through learning class-dependent selection vectors  --- to adapt to the correct symmetry for the different classes.

    \begin{remark}
    In this particular example, since we know the symmetries of each class a-priori, we could of course choose two \emph{different} linearly equivariant architectures to learn each function. Our architecture does however not need this assumption -- it can be applied also when the symmetries of the individual classes are not known a-priori.
    \end{remark}

As a proof of concept, we perform a toy experiment in a similar setting to the above. We will define an image classification problem which is linearly invariant to translations, and projectively invariant to flips along the horizontal and vertical axes, i.e., the Vierergroup $\Z_2^2$. Its character group also contains four elements---the values in $(1,0)$ and $(0,1)$ can both be either $+1$ or $-1$.

\myparagraph{Data} We modify the MNIST dataset \citep{lecun}, by adding an additional class, which we will refer to as `NaN'. When an image $v \in \R^{n,n}$ is loaded, we randomly (with equal probability) either use the image as is, flip it horizontally or flip it vertically. When flipped, the labels are changed, but differently depending on the label: if it is either $0$, $1$ or $2$, it stays the same. If it is $3$, $4$ or $5$, we change the label to NaN if the image is flipped horizontally, but else not. If it is $6$ or $7$, we instead change the label if the image is flipped vertically, but  else not. The labels of $8$ and $9$ are changed regardless which flip is applied. Note that these rules for class assignments are completely arbitrarily chosen, and we do not use the knowledge of them when building our model.

\myparagraph{Models}
Since we operate on images, we build a linearly translation-equivariant architecture by choosing layers in the space of $3\times 3$-convolutional layers.
We create \textbf{ViererNet}---a four-layer projectively equivariant network according to the procedure described in this section---and a four-layer baseline CNN with comparable number of parameters.
Details about the models can be found in Appendix~\ref{sec:app-vierer}.


\myparagraph{Results} 
We train 30 models of each type.
The evolution of the test and training accuracy is depicted in Figure \ref{fig:mnist_res}, along with confidence intervals containing $80\%$ of the runs. We clearly see that the ViererNet is both better at fitting the data, and is more stable and requires shorter training to generalize to the test-data. To give a quantitative comparison, we for each model determine which epoch gives the best median performance on the test data.  The ViererNet then achieves a median accuracy of $92.8\%$, whereas the baseline only achieves $90.2\%$. We subsequently use those epochs to test the hypothesis that the ViererNet outperforms the baseline. Indeed, the performance difference is significant ($p<0.025$). 

In Appendix \ref{sec:app-vierer}, similar modifications of the CIFAR10 dataset are considered. This dataset is less suited for this experiment (since one of the flips often leads to a plausible CIFAR10-image), and the results are less clear: The baseline outperforms the ViererNet slightly, but not significantly so. To some extent, the trend that the ViererNet requires less training to generalize persists.



\section{Generalizing Tensor Field Networks to projective representations of \texorpdfstring{$\SO(3)$}{SO(3)}} \label{sec:exp}

In this section we consider a regression task on point clouds. 
The task is equivariant under a 
projective representation of $\SO(3)$ corresponding to a linear representation of 
$\SU(2)$ as was briefly described in Example~\ref{ex:spinrep}. We give an introduction to the representation theory of $\SU(2)$ in Appendix \ref{app:spinor-field}.
In short, $\SU(2)$ has a $2\ell + 1$ dimensional irreducible representation for each $\ell=0, 1/2, 1, 3/2, \ldots$.
The vector spaces these representations act on are isomorphic to $\C^{2\ell + 1}$ and we denote them by $\mathcal{V}_\ell$.

Our main result (Theorem~\ref{th:main}) and the discussion in Section~\ref{sec:structure} show that we
for projective $\SO(3)$ equivariance could build a net using
linearly $\SU(2)$ equivariant layers. 
We will here use linear layers of the tensor product type used in Tensor Field Networks (TFNs) \citep{tensorfield} and many subsequent works \citep{geigerE3nnEuclideanNeural2022, brandstetter2021geometric}.
The filters and features are then functions on $\R^3$, and the resulting construction hence deviates slightly from our theoretical results -- see Appendix \ref{app:spinor-field} -- but we think that generalizing this canonical approach for point cloud
processing networks to projective equivariance is of high relevance to the paper.

\begin{figure}[t]
    \centering
    {
    \includegraphics[height=.16\textwidth]{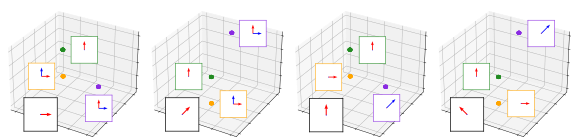}
    \includegraphics[height=.16\textwidth]{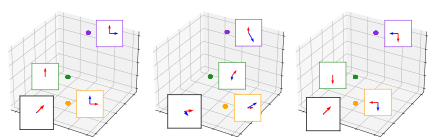}
    }
    \caption{
        Data used for the regression task with $\SO(3)$-equivariance. 
        We define four classes, each class being a point cloud with three points in 3D.
        To each 3D point we attach a spinor feature $s\in\C^2$.
        These spinor features are plotted inside the frames next to each 3D point, the frames and points are color coded.
        $\mathrm{Re}(s)$ is plotted in \textcolor{red}{red} and $\mathrm{Im}(s)$ in \textcolor{blue}{blue}.
        For every class a different spinor should be regressed, which is shown in the black frame.
        \textbf{Top:} The four classes.
        They are defined such that if one looks only at the spatial locations of the points, the first and third classes are equivalent as are the second and fourth. 
        If one instead looks only at the spinor features, the first and second classes are equivalent as are the third and fourth.
        A network that regresses the output spinor correctly for all classes must take into account both the locations and the spinor features of each point.
        \textbf{Bottom:} The regression should be equivariant under rotations of the point clouds.
        When a point cloud is rotated by $R\in\SO(3)$, the spinor features and labels are transformed by a corresponding matrix $U(R)\in\SU(2)$.
        A random rotation of the leftmost point cloud is shown in the center column.
        Given a rotation $R$, the corresponding $U(R)$ is however only defined up to sign.
        E.g., the identity rotation corresponds to both the identity matrix $I\in\SU(2)$ and $-I$.
        The rightmost column illustrates a correct regression even though all spinor input features have been multiplied by $-I$, while the output spinor has not.
    }
    \label{fig:su2_illustration}
\end{figure}

\myparagraph{Data}
The data in the experiment consists of point clouds equipped with spinor features and regression targets that are spinors.
By spinors we mean elements of $\mathcal{V}_{1/2}$.
That is, one data sample is given by
$ \left(\{(x_i, s_i)\}_{i=1}^n, t\right) $
where $x_i\in\R^3$ are the 3D positions of the points, $s_i\in\calV_{1/2}\sim \C^2$ are the attached spinor features
and $t\in \calV_{1/2}$ is the regression target.
We define an action of $\SO(3)$ on these samples by
\begin{equation}\label{eq:point_cloud_su2}
    R \cdot \left(\{(x_i, s_i)\}_{i=1}^n, t\right) := \left(\{(Rx_i, U(R)s_i)\}_{i=1}^n, U(R)t\right).
\end{equation}
Here $U(R)$ is an element of $\SU(2)$ that corresponds to $R$.
We note that since $\SU(2)$ double covers $\SO(3)$, $U(R)$ is only defined up to sign. Hence, $s_i$ and $t$ can be thought of as $\R$-projective, and the action of $\SO(3)$ on them as a projective representation.

The task is to regress the target in a way that is equivariant to 
\eqref{eq:point_cloud_su2}
and invariant to translations of the point clouds.
The data is defined by four prototype samples that are shown in Figure~\ref{fig:su2_illustration}.
To make the task more difficult we also add Gaussian noise to the spatial coordinates $x_i$, varying the noise level (i.e., std.) from $0$ to $2/5$.
During training, a network sees only the four prototypes + noise, while evaluation is done on rotated prototypes + noise.
According to Theorem~\ref{th:main},
and the discussion regarding the rotation group in Section~\ref{sec:structure},
the projectively $\SO(3)$-equivariant network architectures are in this case exactly the standard linearly $\SU(2)$-equivariant networks.
We illustrate the data and how it transforms under $\SO(3)$ in Figure~\ref{fig:su2_illustration}.

\myparagraph{Models} We will tackle the problem by building an equivariant net in the spirit of
Tensor Field Networks (TFNs) \citep{tensorfield}. Let us first recall how TFNs work. Our data are point clouds $\{x_i\}_{i=1}^n$ in $\R^3$ with features $f_i$ in some vector space $V$ attached to each point, on which $\SO(3)$ is acting through $\widehat{\rho}_V$. 
We map the inputs to point clouds with features $f_i'$ in some vector space $W$ (on which $\SO(3)$ acts through $\widehat\rho_W$) via convolution with a filter function $\Psi: \R^3 \to \Hom(W,V)$ through 
\begin{align}
    f_i' = \sum_{j\neq i} \Psi(x_j-x_i) f_j 
    \label{eq:tensor-filter}.
\end{align} Provided the filter $\Psi$ satisfies the invariance condition $\Psi(Rx)~=~\widehat{\rho}_W(R)\Psi(x)\widehat{\rho}_V(R)^{-1}$, $R \in \SO(3)$, this defines a layer invariant to translations and rotations of $\R^3$, and the $\SO(3)$-action on $V$ and $W$. We could make an ansatz of $\Psi$ as a member of some finite-dimensional space of functions $\R^3\to \Hom(V,W)$, and then resolve this invariance condition directly using our theory. However, the resulting arcitecture would be numerically heavy to implement. We have therefore instead used the idea of TFNs.

The idea of a TFN is to define $\Psi$ using a tensor product, $\Psi(x)v = \psi(x) \otimes v $, $v \in V$. Here, $\psi(x)$ lives in some third vector space $U$, and satisfies the condition $\psi(Rx)=\rho_U(R)\psi(x)$, $R \in \SO(3)$. $\Psi$ maps equivariantly into the space $W=U\otimes V$, which can be decomposed into spaces isomorphic to the irreps $\calV_{0}, \calV_1{}, \calV_{2}, \dots$ of the action of $\SO(3)$.  In this manner, we may transform features in different irreps into new ones living in other irreps. For instance, if we combine features and filters living in the space $\calV_1 \sim\R^3$ (i.e., vectors), we end up with one scalar component $\calV_0$, one vector component in $\calV_1$ and a traceless symmetric $3\times 3$-matrix component in $\calV_2$. The higher order component $\calV_2$ can be discarded to keep the feature dimensions low.
Another type of layer are self-interaction layers
where features of the same irrep at each point are linearly combined using learnt weight matrices.

The idea of our approach is now to let $U$ and $V$ be spaces on which $\SU(2)$ is acting, i.e., also consider features and vectors of half-integer spin. We limit ourselves to $\ell=0,1/2$ and $1$, i.e., scalars, spinors and vectors, and call the resulting nets Spinor Field Networks. The hyperparameter choices for a Spinor Field Network in our implementation are the
number of scalar, spinor and vector features output by each layer. We use sigmoid-gated nonlinearities \citep{weiler3DSteerableCNNs2018} for non-scalar features and \texttt{GeLU} for scalar features.

\begin{figure}
    \centering
    \begin{minipage}[c]{0.37\textwidth}
    \centering
    \includegraphics[width=\textwidth]{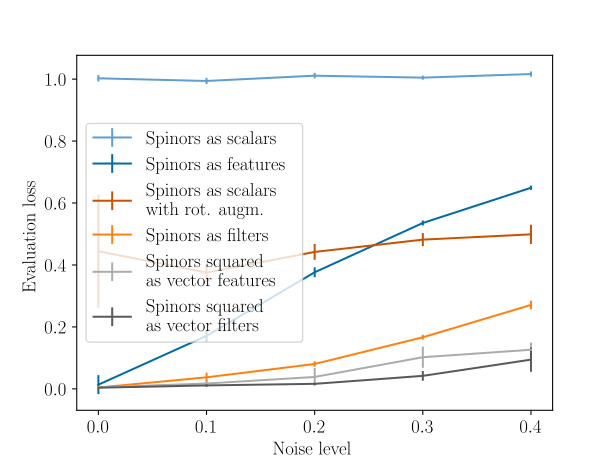}
    \end{minipage}
    \hfill
    \begin{minipage}[c]{0.62\textwidth}
    \caption{
    Results of the spinor regression experiments for varying architectures and spatial noise levels.
    We plot the mean over 30 runs, showing the standard deviation with errorbars.
    }
\label{fig:spinor-results}
    \end{minipage}
\end{figure}

We consider in total five different models.
(i) \emph{Spinors as scalars.} A non-equivariant baseline TFN that treats input and output spinors as four real scalars. It has the correct equivariance for the 3D structure of the locations but not for the spinors.  (ii) \emph{Spinors as features.} An $\SU(2)$-equivariant net as described above, with intermediate scalar,vector and spinor features. (iii) \emph{Spinors squared as vector features.} Prior to the first layer, we
tensor square the spinor features $s_i \mapsto s_i \otimes s_i$ and extract a vector feature at each point, which we then process $\SO(3)$-equivariantly.
This allows the network to only use real valued (scalar and vector) features in the intermediate layers, which is likely an advantage in most modern machine learning frameworks (including PyTorch which we use).
In the last layer, the scalar and vector features are tensored with the input spinors to produce an output spinor.


The last network types have filters that consist of not only $\calV_\ell$ valued functions, but also the input spinors themselves.
That is, we define layers
 $f_i' = \sum_{j\neq i} (\psi(x_j - x_i) \oplus s_j) \otimes f_j$.
This could be generalized to tensor spherical harmonics valued filters which we explain in \ref{app:tensor-harmonics}. We consider two such types:
(vi) \emph{Spinors as filters} works as just described.
(v) \emph{Spinors squared as vector filters} analogously defines vector valued filters  by tensor squaring the input spinors as in the \emph{Spinors squared as vector features} case.
This again allows for a mostly real valued network.

Each network consists of three layers and the hyperparameters are chosen to make the number of parameters approximately equal between all networks.
In all networks the input feature at a location $x_i$ consists of the average of the vectors $x_i-x_j$ for $j\neq i$ and for all except the two last networks also the spinor feature $s_i$ (interpreted as either 4 scalars, a spinor or a vector after squaring).
The final output is the mean of the features at each input location, meaning
that the last layer maps to a single spinor feature---except in the non-equivariant variant in which the last layer maps to 4 scalar features.  We describe the training setup and further details in Appendix~\ref{app:sfn-details}.

\myparagraph{Results} All networks are trained on just the four point clouds shown in the left part of Figure~\ref{fig:su2_illustration} (plus added spatial noise), except one version of the non-equivariant net which is trained on randomly rotated versions of these four point clouds as data augmentation.
The evaluation is always on randomly rotated versions of the data.
We present results in Figure~\ref{fig:spinor-results}.
Notably, the networks that tensor square the spinors before the first layer to then work
on real-valued data until the last layer perform the best.
At low noise levels, the networks that work directly on complex-valued spinors also perform well,
but their performance degrades quickly with added noise.
The network without correct equivariance unsurprisingly performs 
quite poorly with, and extremely poorly without, data augmentation.
The networks with filters defined using the input spinors outperform the networks where the inputs are only fed in once at the start of the network.
This is likely due to the fact that inserting the inputs in each layer makes the learning easier for the network.

\section{Conclusion}
In this paper, we theoretically studied the relation between projective and linear equivariance for neural network linear layers revealed that our proposed approach is the most general one. We in particular found examples (including $\SO(3)$, or rather $\SU(2)$) for which the two problems are, somewhat surprisingly, equivalent. Building on the knowledge of the structure of the set of projectively equivariant layers our main result gave us, we proposed a neural network model for capturing projective equivariance, and tested it on a toy task with class-dependent symmetries. Finally, we experimentally evaluated the merit of using projective equivariance for tasks involving spinor-valued features.

\subsubsection*{Acknowledgement}
All authors were supported by the Wallenberg AI, Autonomous Systems and Software Program (WASP) funded by the Knut and Alice Wallenberg Foundation. The computations were enabled by resources provided by the National Academic Infrastructure for Supercomputing in Sweden (NAISS) and the Swedish National Infrastructure for Computing (SNIC) at C3Se Chalmers, partially funded by the Swedish Research Council through grant agreements no. 2022-06725 and no. 2018-05973.

\bibliography{biblio}
\bibliographystyle{tmlr}

\newpage
\appendix

\section{Technical definitions}\label{app:defs}
Table~\ref{tab:glossary} contains a symbol glossary for various symbols used in this work.
The remainder of this section contains definitions used in the text, in particular a few that did not have room in the main text, compiled for the convenience of the reader.


\begin{definition}[Group homomorphism]
 Given two groups $G, H$, a group homomorphism from $G$ to $H$ is a map $a: G\to H$ that respects the group structure, i.e., such that $a(g_1g_2) = a(g_1)a(g_2)$ for all $g_1, g_2$ in $G$.
\end{definition}
\begin{definition}
    Given a vector space $V$ over a field $\F$, $\mathrm{P}(V)$ is the corresponding projective space consisting of equivalence classes under the relation
    \begin{align}
        v \sim w \, \Longleftrightarrow \, v = \lambda w \text{ for some } \lambda \neq 0
    \end{align}
\end{definition}
\begin{definition}[General linear group, Projective linear group]
The general linear group $\GL(V)$ of a vector space $V$ is the group of all invertible linear maps from $V$ to itself. The set of equivalence classes of $\GL(V)$ under the equivalence relation $A \sim B$ $\Longleftrightarrow$ $A=\lambda \cdot B$ for a $\lambda \in \F \backslash \{0\}$ is the projective linear group $\PGL(V)$.
\end{definition}
\begin{definition}[Linear representation]
    A \emph{linear representation} of a group $G$ on a vector space $V$ is a group homomorphism $\widehat{\rho}: G \to \GL(V) $.
\end{definition}
\begin{definition}[Projective representation]
    A \emph{projective representation} of a group $G$ on a projective space space $\Proj(V)$ is a group homomorphism $\rho: G \to \PGL(V)$. 
\end{definition}
\begin{definition}[Linear (projective) equivariance]
    Let $V$ and $W$ be vector spaces equipped with linear (projective) representations $\varrho_V$ and $\varrho_W$. A linear map $A\in \Hom(V,W)$ is then \emph{linearly (projectively) equivariant} if 
    \begin{align*}
        A \varrho_V(g)v = \varrho_W(g)Av , \quad g\in G, v\in V
    \end{align*}
    as elements in $W$ (in $\mathrm{P}(W)$).
\end{definition}
\begin{definition}[Topological group]
A topological group is a set $G$ that is simultaneously a topological space and a group such that
\begin{enumerate}
    \item The group operation is continuous w.r.t. the topology.
    \item The inversion map $g\mapsto g^{-1}$ is continuous w.r.t. the topology.
\end{enumerate}
\end{definition}
\begin{definition}[Homeomorphism] A homeomorphism is an invertible, continuous map between two topological spaces whose inverse is also continuous. A function maps a set homeomorphically if the restriction of the function to that set is a homeomorphism.
\end{definition}
\begin{definition}[Character group]
    Let $G$ be a topological group. The set of continuous group homomorphisms $\varepsilon: G \to \F$ is called the \emph{character group} of $G$, and is denoted $G^*$.
\end{definition}
\begin{definition}[Covering group]
    Let $G$ be a topological group. A group $H$ is called a covering group of $G$ if there exists a \emph{group covering} $\varphi: H \to G$, i.e. a surjective continuous group homomorphism which maps some neighbourhood of the unit element $e_H \in H$ to a neighborhood of the unit element $e_G \in G$ homeomorphically.
\end{definition}
\begin{definition}[Lift of a projective representation]
Let $\rho: G \to \PGL(V)$ be a projective representation, and $H$ a covering group of $G$, with group covering $\varphi$. A \emph{lift} of $\rho$ is a linear representation $\widehat{\rho}: H \to \GL(V) $ with $\rho \circ \varphi = \Pi_{\GL(V)} \circ \widehat{\rho}$.
\end{definition}
\begin{definition}[Tensor product of finite dimensional vector spaces] Given two vector spaces $U$ with basis $\{u_i\}_{i=1}^n$ and $V$ with basis $\{v_i\}_{i=1}^m$, their tensor product $U \otimes V$ is a vector space with one basis vector for each pair $(u_i, v_j)$ of one basis vector from $U$ and one from $V$.
The basis vectors of $U \otimes V$ are typically denoted $\{u_i \otimes v_j\}_{i=1,j=1}^{n, m}$.
\end{definition}
\begin{definition}[Tensor product of vectors]
The tensor product of an element $u=\sum_i \alpha_i u_i$ in $U$ and an element $v = \sum_j \beta_j v_j$ in $V$ is the element $u\otimes v = \sum_{i,j} \alpha_i\beta_j u_i\otimes v_j$ of $U\otimes V$.
This defines a bilinear map $(u,v)\mapsto u\otimes v$.
\end{definition}
\begin{definition}[Tensor power]
The tensor power $U^{\otimes k}$ is simply the tensor product of $U$ with itself $k$ times.
\end{definition}


\begin{definition}[Irreducability] A linear representation $\widehat{\rho}: G \to \GL(V)$ is called \emph{irreducible} if there is no subspace $U \notin \{\{0\}, V\}$ which is invariant under all $\widehat{\rho}(g)$, $g \in G$.
\end{definition}

\section{Omitted proofs}
\label{sec:app-proofs}

In this section, we collect some further proofs we left out in the main text.

\subsection{Induced projected representations}

We begin, for completeness, by remarking and proving a small statement that we use implicitly throughout the entire article. 

\begin{proposition}
    Let $G$ be a group, $V$ and $W$ vector spaces, $\widehat{\rho}_V$ and $\widehat{\rho}_W$ linear representations of $G$ on $V$ and $W$, respectively, and $\rho_V$ and $\rho_W$ their corresponding projected representations. Then, the induced projective representation $\rho$ on $\mathrm{Hom}(V,W)$
    \begin{align*}
        \rho(g)A = \rho_W(g) \circ A \circ \rho_V(g)^{-1}
    \end{align*}
    is the projection of the corresponding induced linear projection
    \begin{align*}
        \widehat{\rho}(g)A = \widehat{\rho}_W(g) \circ A \circ \widehat{\rho}_V(g)^{-1}.
    \end{align*}
\end{proposition}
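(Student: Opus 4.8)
The plan is to verify the two things the word ``projection'' encodes: first, that $\widehat{\rho}(g)A = \widehat{\rho}_W(g)\circ A\circ \widehat{\rho}_V(g)^{-1}$ is genuinely a linear representation of $G$ on $\Hom(V,W)$, and second, that applying $\Pi_{\GL(\Hom(V,W))}$ to it reproduces the induced projective representation $\rho$. In other words, the goal is to establish $\rho = \Pi_{\GL(\Hom(V,W))}\circ\widehat{\rho}$, which is exactly the defining relation of a projected representation from Example~\ref{ex:projectedrep}.

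First I would check the representation property for $\widehat{\rho}$. Linearity of each $\widehat{\rho}(g)$ in $A$ and invertibility (with inverse $\widehat{\rho}(g^{-1})$) are immediate, so the only point is the homomorphism law, which follows by a direct computation: for $g,h\in G$,
\begin{align*}
\widehat{\rho}(g)\widehat{\rho}(h)A = \widehat{\rho}_W(g)\widehat{\rho}_W(h)\,A\,\widehat{\rho}_V(h)^{-1}\widehat{\rho}_V(g)^{-1} = \widehat{\rho}_W(gh)\,A\,\widehat{\rho}_V(gh)^{-1} = \widehat{\rho}(gh)A,
\end{align*}
using that $\widehat{\rho}_V$ and $\widehat{\rho}_W$ are themselves representations and that $\widehat{\rho}_V(h)^{-1}\widehat{\rho}_V(g)^{-1} = \widehat{\rho}_V(gh)^{-1}$. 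Continuity is inherited from that of $\widehat{\rho}_V$ and $\widehat{\rho}_W$.

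Next I would identify the projection. By definition, $\rho(g)$ is the class in $\PGL(\Hom(V,W))$ of the conjugation map obtained by picking any representatives of $\rho_V(g)\in\PGL(V)$ and $\rho_W(g)\in\PGL(W)$. Taking the canonical choices $\widehat{\rho}_V(g)$ and $\widehat{\rho}_W(g)$, the conjugation map is exactly $\widehat{\rho}(g)$, so $\rho(g) = \Pi_{\GL(\Hom(V,W))}(\widehat{\rho}(g))$, which is the claim.

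The one genuinely delicate point — and the step I would treat most carefully — is that $\rho(g)$ is independent of the chosen representatives, which is what makes it well defined as an element of $\PGL(\Hom(V,W))$ at all. Replacing $\widehat{\rho}_W(g)$ by $\lambda\widehat{\rho}_W(g)$ and $\widehat{\rho}_V(g)$ by $\mu\widehat{\rho}_V(g)$ with $\lambda,\mu\in\F\setminus\{0\}$ multiplies the conjugation map $A\mapsto \widehat{\rho}_W(g)A\widehat{\rho}_V(g)^{-1}$ by the single scalar $\lambda\mu^{-1}$, leaving its class in $\PGL(\Hom(V,W))$ unchanged. This scalar bookkeeping is essentially the whole content of the proposition: every map in sight is routine, but it is precisely the cancellation of the ambiguity scalars that lets the linear conjugation representation descend to the projective one.
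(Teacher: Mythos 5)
Your proposal is correct and its essential content coincides with the paper's own proof: the key step in both is the observation that replacing the representatives $\widehat{\rho}_W(g)$, $\widehat{\rho}_V(g)$ by $\lambda\widehat{\rho}_W(g)$, $\mu\widehat{\rho}_V(g)$ multiplies the conjugation map by the single scalar $\lambda\mu^{-1}$, so its class in $\PGL(\Hom(V,W))$ is well defined and equals the projection of $\widehat{\rho}(g)$. The extra verification that $\widehat{\rho}$ is itself a representation is a harmless addition the paper leaves implicit.
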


\begin{proof}
    What we need to prove is that if $M$ is equivalent to $\widehat{\rho}_W(g)$ and $N$ is equivalent to $\widehat{\rho}_V(g)$, the linear map
    \begin{align*}
        R_{MN} : \Hom(V,W) \to \Hom(V,W) , A\mapsto MAN^{-1}
    \end{align*}
    is equivalent to $\widehat{\rho}(g)$. However, the stated equivalences mean that $M= \lambda \widehat{\rho}_W(g)$ and $N= \mu\widehat{\rho}_V(g)$ for some non-zero $\mu, \lambda$. This in turn shows that
    \begin{align*}
        R_{MN}(A) = \lambda \widehat{\rho}_W(g) A (\mu \widehat{\rho}_V(g))^{-1} = \lambda \mu^{-1} \widehat{\rho}(g)A,
    \end{align*}
    i.e., that $R_{MN}$ is equivalent to $\widehat{\rho}(g)$. The claim has been proven.
\end{proof}

\subsection{Theorem \ref{th:eqfeatures}}

\begin{proof}[Proof of Theorem \ref{th:eqfeatures}]
    We will prove that $$v_k^\varepsilon(\widehat{\rho}_0(g)) = \varepsilon(g)\widehat{\rho}_{k}(g)v_k^\varepsilon(v)$$ for $g\in G$, $v\in V$ for all $k$. This in particular means $v_k^\varepsilon(\widehat{\rho}_0(g)) = \widehat{\rho}_{k}(g)v_k^\varepsilon(v)$ as elements in $\Proj(V_{k})$, which is to be proven.
    
 We proceed with induction starting with $k=1$. We have for $g\in G$ and $\varepsilon \in G^*$ arbitrary
    \begin{align*}
    v_1^\varepsilon(\widehat{\rho}_0(g)v) &=\sigma_0(\widehat{\rho}_1(g)\widehat{\rho}_1(g)^{-1}A_0^\varepsilon \widehat{\rho}_0(g)v) 
     \stackrel{\eqref{eq:epsilonequivariance}}{=} \widehat{\rho}_1(h) \sigma_0(\varepsilon(g)A_0^\varepsilon v) =
    \varepsilon(g)\widehat{\rho}_1(g)  v_1^\varepsilon(v),
\end{align*}
where we in the final step used our assumption on $\sigma_0$. 

To prove the induction step $k \to k+1$, let us first concentrate on each expression $A_k^\gamma v_k^\delta$ in \eqref{eq:network2}. Due to the induction assumption, we have $v_k^\delta(\widehat{\rho}_0(g)v) = \delta(g)\widehat{\rho}_{k}(g) v_1^\delta(g)$ for $g\in G$, $\delta \in G^*$. Consequently
\begin{align*}
    A_k^\gamma v_k^\delta(\widehat{\rho}_0(g)v) &= \delta(g) A_k^\gamma \widehat{\rho}_{k}(g) v_1^\delta(v) = \delta^{1}(t) \widehat{\rho}_{k+1}(g) \widehat{\rho}_{k+1}(g)^{-1} A_k^\gamma \widehat{\rho}_k(h) v_1^\delta(v) \\
    &\stackrel{\eqref{eq:epsilonequivariance}}{=}\delta(g)\gamma(g)\widehat{\rho}_{k+1}(g) A_k^\gamma v_1^\delta(v)
\end{align*}
Summing over all $\gamma,\delta \in G^*$ with $\gamma \cdot \delta = \varepsilon$ yields a feature that transforms the claimed way. This is not changed through an application of the equivariant $\sigma_k$ -- since it by assumption commutes with $\varepsilon(g)$. The claim has been proven.
\end{proof}

\subsection{Lemma \ref{lem:EquivIsInv}}

\begin{proof}[Proof of Lemma \ref{lem:EquivIsInv}]
    Equivariance of the map $A$ means that for all $g\in G$ and $v \in \Proj(V)$, we have $\rho_1(g)Av = A\rho_0(g)v$. Since all $\rho_0(g)$ are invertible, this is equivalent to 
    \begin{align*}
        \rho_1(g)A\rho_0(g)^{-1}v = Av \text{ for all $v \in V$.}
    \end{align*}
    Now, the above equality is not an equality of elements in $W$, but rather of elements in $\Proj(W)$. That is, it says that all vectors $v \in V$ are solutions of the generalized eigenvalue problem $\rho_1(g)A\rho_0(g)^{-1}v= \lambda Av$. The aim is to show that this implies that $\rho_1(g)A\rho_0(g)^{-1}=\lambda A$ for some $\lambda \in \F$. To show this, we proceed in two steps.
    
    \textbf{Claim} Let $M \in \GL(V)$. If every vector $v\in V$ is an eigenvector of $M$, $M$ is a multiple of the identity.
    
    \textbf{Proof} Suppose not. Then, since all vectors are eigenvectors, there exists $v\neq w \neq 0$ and $\lambda \neq \mu$ with $Mv = \lambda v$, $Mw = \mu v$. Now, again since all vectors are eigenvectors of $M$, there must exist a third scalar $\sigma$ with $M(v+w) = \sigma(v+w)$. Now,
    \begin{align*}
        \sigma(v+w) = M(v+w) = Mv + Mw = \lambda v + \mu w \ \Longleftrightarrow \ (\sigma-\mu)w = (\mu-\sigma)v.
    \end{align*}
    Now, the final equation can only be true if $\lambda-\mu = \mu - \sigma = 0$, i.e., $\lambda=\sigma=\mu$, which is a contradiction.
    
    \textbf{Claim} Let $E, F \in \Hom(V,W)$. If every $v\in V$ is a solution of the generalized eigenvalue problem $Ev = \lambda Fv$, $E$ is a multiple of $F$.
    
    \textbf{Proof} First, $E$ restricted to $\ker F$ must be the zero map, since for all $v\in \ker F$, $Ev= \lambda Fv = \lambda \cdot 0 = 0$. Secondly, $F^\circ : \ker F^{\perp} \to \ran F$ is an isomorphism, so that the generalized eigenvalue problem of $Ev=\lambda Fv$ on $\ker F^\perp$ is equivalent to the eigenvalue problem of  $E (F^\circ)^{-1}:\ran F \to \ran F$. By the previous claim, $E (F^{\circ})^{-1}=\lambda \id$ for some $\lambda \in F$, i.e., $E = \lambda F^\circ$ on $\ker F^{\perp}$. Since both $E$ and $F$ additionally are equal to the zero map on $\ker F$, the relation $E = \lambda F$ remains true also there. The claim has been proven.
 \end{proof}
 
 

\subsection{Proposition \ref{prop:structure}} \label{sec:app-structure}

\begin{proof} [Proof of Proposition \ref{prop:structure}]
    Ad $(i):$ If $\calV$ is finite-dimensional and $H$ is compact, it is well known that we (by modifying the inner product on $\calV$) may assume that $\widehat{\rho}$ is unitary---for convenience, a proof is given in Section \ref{sec:app-grouptheory}. This means that  $|v| = |\widehat{\rho}(g)v|$ for all $h \in H$. If now $v\in U^\varepsilon$ is non trivial, we have $0\neq|v| = |\widehat{\rho}(h)v| = |\varepsilon(h)v|$, i.e., $|\varepsilon(h)|=1$ for all $h\in H$.

    Ad $(ii)$:  Let $\varepsilon \in H^*$, $x\in U^\varepsilon$ $k,h \in H$ be arbitrary. Then 
    \begin{align*}
        \widehat{\rho}(\{k,h\})x = \widehat{\rho}(k)\widehat{\rho}(h)\widehat{\rho}(k)^{-1}\widehat{\rho}(h)^{-1}x = \varepsilon(k) \varepsilon(h) \varepsilon(k)^{-1} \varepsilon(h)^{-1}x = x.
    \end{align*}
    Since $\widehat{\rho}$ is a group isomorphism, this relation extends to $\widehat{\rho}(\ell)x =x$ for all $\ell \in \{H,H\}$. This proves the first part.

    To prove the second claim, we will show that the operators ${\widehat{\rho}(h), \ h \in H}$ restricted to $U_{HH}$ are commuting operators $U_{HH} \to U_{HH}$. Since we are in the case $\F = \C$, and we WLOG can assume that they are unitary, this means that they are simultaneously diagonalizable, i.e.,
    \begin{align*}
        \widehat{\rho}(h)x_i = \lambda_i(h)x_i,
    \end{align*}
    for some basis $x_i$ of $U_{HH}$ and 'eigenvalue functions' $\lambda_i : H \to \C$. With the exact same argument as in the proof of the main theorem, we show that $\lambda_i \in H^*$. Hence, the  $x_i$ are elements of subspaces $U^\varepsilon$, which shows that $U_{HH} \sse \mathrm{span} ( U^\varepsilon, \varepsilon \in H^*)$. Since the other inclusion holds in general by what we just proved, the claim follows.

    So let us prove that the $\widehat{\rho}(h)$ are commuting operators $U_{HH} \to U_{HH}$. Both of this follows from the following equality: 
    \begin{align*}
        \widehat{\rho}(k)\widehat{\rho}(h) = \widehat{\rho}(kh) = \widehat{\rho}(khk^{-1}h^{-1}hk) = \widehat{\rho}(h)\widehat{\rho}(k) \widehat{\rho}(\{k^{-1},h^{-1}\}).
    \end{align*}
       To show that $\widehat{\rho}(h)$ are operators $U_{HH}\to U_{HH}$, we may now  argue that $x\in U_{HH}$, and $k \in \{H,H\}$ and $h\in H$ are arbitrary, it follows    
       \begin{align*}
            \widehat{\rho}(k)\widehat{\rho}(h)x = \widehat{\rho}(h)\widehat{\rho}(k) \widehat{\rho}(\{k^{-1},h^{-1}\})x \stackrel{x \in U_{HH}}{=} \widehat{\rho}(h)x
    \end{align*}    
    which means that $\widehat{\rho}(h)x \in U_{HH}$. To prove that the maps commute on $U_{HH}$ is even easier: for $x\in U_{HH}$ and $k,h\in H$ arbitrary, we immediately deduce 
    \begin{align*}
        \widehat{\rho}(k)\widehat{\rho}(h)x = \widehat{\rho}(h)\widehat{\rho}(k) \widehat{\rho}(\{k^{-1},h^{-1}\})x = \widehat{\rho}(h)\widehat{\rho}(k)x.
    \end{align*}
    The proof is finished.
\end{proof}

\subsection{\texorpdfstring{$U^\sgn$ for the standard action on $(\F^n)^{\otimes k}$}{Usgn for the standard action on Fn tensorpower k}} \label{sec:app-permsgn}
\begin{proposition} \label{prop:permsign}
    Let $\widehat{\rho}$ be the standard representation of $S_n$ on $(\F^n)^{\otimes k}$. For $n \geq k+2$, $U^\varepsilon$ is nontrivial only for $\varepsilon=1$. The bound is tight.
\end{proposition}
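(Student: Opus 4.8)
The plan is to work entirely in coordinates. Fix the standard basis $\{e_i\}$ of $\F^n$, so that $(\F^n)^{\otimes k}$ has basis $\{e_I := e_{i_1}\otimes\cdots\otimes e_{i_k}\}$ indexed by multi-indices $I=(i_1,\dots,i_k)\in[n]^k$, and the standard representation reads $\widehat\rho(\pi)e_I = e_{\pi I}$ with $\pi I := (\pi(i_1),\dots,\pi(i_k))$. Writing a general tensor as $v=\sum_I c_I e_I$, the defining equation $\widehat\rho(\pi)v = \sgn(\pi)v$ of $U^{\sgn}$ becomes, after comparing coefficients and using $\sgn(\pi)=\pm1$, the linear system
\[
 c_{\pi I} = \sgn(\pi)\,c_I \qquad\text{for all } \pi\in S_n,\ I\in[n]^k.
\]

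First I would establish a vanishing criterion. For a multi-index $I$, let $S_I\subseteq[n]$ be the set of distinct values occurring in $I$, and set $s=|S_I|\le k$. A permutation fixes $I$ exactly when it fixes every element of $S_I$, so $\Stab(I)$ is the copy of $S_{n-s}$ acting on $[n]\setminus S_I$. If $n-s\ge 2$ this stabilizer contains a transposition $\tau$, which is odd; taking $\pi=\tau$ in the system gives $c_I = c_{\tau I} = \sgn(\tau)c_I = -c_I$, hence $c_I=0$ (here I use $1\neq-1$ in $\F$, valid for the $\R,\C$ of interest). Since $s\le k$ always holds, the hypothesis $n\ge k+2$ forces $n-s\ge n-k\ge 2$ for \emph{every} $I$, so all coefficients vanish and $U^{\sgn}=\{0\}$. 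Combined with $S_n^*=\{1,\sgn\}$, this is precisely the claim that $U^\varepsilon$ is nontrivial only for $\varepsilon=1$.

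For tightness I would exhibit a nonzero element when $n=k+1$. Now the injective multi-indices (all $k$ entries distinct, so $s=k=n-1$) have trivial stabilizer $\Stab(I)\cong S_{n-s}=S_1$, so the obstruction above disappears. Fixing any injective multi-index $I_0$, each injective $I$ equals $\pi_I I_0$ for a \emph{unique} $\pi_I\in S_n$; I then set $c_I:=\sgn(\pi_I)$ on the orbit of $I_0$ and $c_I:=0$ off it. Well-definedness is immediate from uniqueness of $\pi_I$, and the system is verified directly: on the orbit $\pi_{\sigma I}=\sigma\pi_I$, whence $\sgn(\pi_{\sigma I})=\sgn(\sigma)\sgn(\pi_I)$, while off the orbit both sides vanish. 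This $v$ lies in $U^{\sgn}$ and is nonzero since $c_{I_0}=1$, so the bound $n\ge k+2$ is sharp.

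The argument is almost entirely bookkeeping, and the one place demanding genuine care is the tightness construction: I must check that the sign assignment on the orbit of $I_0$ is consistent — which is exactly where triviality of $\Stab(I)=S_1$ for injective indices is essential — and that the resulting tensor transforms by $\sgn$ rather than merely being permuted. The vanishing half is routine once $\Stab(I)$ is identified with $S_{n-s}$ and one observes that it contains an odd permutation as soon as $n-s\ge 2$.
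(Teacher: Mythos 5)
Your proposal is correct and follows essentially the same route as the paper's proof: the vanishing half uses a transposition of two symbols not occurring in the multi-index (guaranteed by $n\ge k+2$) to force each coefficient to equal its own negative, and the tightness half for $n=k+1$ constructs the sign tensor supported on injective multi-indices, with well-definedness resting on the uniqueness of the permutation carrying $I_0$ to $I$. Your phrasing via the stabilizer $\Stab(I)\cong S_{n-s}$ is a slightly more systematic packaging of the same observation, but the argument is identical in substance.
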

\begin{proof}  Let $T \in (\F^n)^{\otimes k}$ be an element of $U^{\sgn}$, and let $I=(i_0, \dots, i_{k-1})$ be an arbitrary $k$-index. Since $n \geq k+2$, there are at least two elements $(i,j) \notin \{(i_0, \dots, i_{k-1})\}$. Consider the transposition $\tau$ of $i$ and $j$. Then, $\tau(i_\ell)=i_\ell$, $\ell \in [k]$, and $\sgn(\tau)=-1$. Consequently,
\begin{align*}
    T_{I} = T_{\tau(I)} = (\widehat{\rho}(\tau)T)_I  = \sgn(\tau)T_I = - T_I,
\end{align*}
since $T$ was in $U^{\sgn}$. Consequently, $T_I=0$. Since $I$ was arbitrary, $T$ must be trivial, and first part of the claim follows. 

To show that the bound is tight, it is enough to construct a non-zero element in $(\F^{k+1})^{\otimes k} \cap U^{\sgn}$. Let $I_0 = (0,1,2, \dots, k) \in [k+1]^k$. For each multi-index $I$, there are either multiple indices in $I$, or there exists a unique permutation $\pi_I \in S_{n+1}$ so that $I = \pi_I^{-1}(I_0)$. Due to the uniqueness, it is not hard to show that $\pi_{\sigma^{-1}(I)} = \pi_I \circ \sigma$, $\sigma \in S_n$. Let us now define a tensor $S$ through
\begin{align*}
    S_I = \begin{cases} 0 &\text{ if $I$ contains multiple indices } \\
                        \sgn(\pi_I) &\text{ else.} 
                        \end{cases}
\end{align*} 
We claim that $S$ is the sought element. To show this, let $I \in [k+1]^k$ and $\sigma \in S_n$ be arbitrary. We have
\begin{align*}
    (\widehat{\rho}(\sigma)S)_I = S_{\sigma^{-1}(I)}
\end{align*}
Now, if $I$ contains multiple indices, $\sigma^{-1}(I)$ must also, so $S_{\sigma^{-1}(I)}=0$, and $(\widehat{\rho}(\sigma)S)_I=0$ for such indices. If $I$ does not contain multiple indices, we deduce
\begin{align*}
    S_{\sigma^{-1}(I)} = \sgn(\pi_{\sigma^{-1}(I)}) = \sgn(\pi_I \circ \sigma) = \sgn(\pi_I) \sgn(\sigma) = \sgn(\sigma)S_I.
\end{align*}
Since $I$ was arbitrary, we have shown that $\widehat{\rho}(\sigma)S=\sgn(\sigma)S$, which was the claim.
\end{proof}

\begin{remark}
    In an earlier version of the manuscript, we proved the above proposition in a different, arguably more clumsy, manner. Since it provides a nice connection to the results in \citep{maron2018invariant}, we include a sketch of that argument also here.

    First, we realise that $U^\sgn$ being trivial is the same as to say that {\renewcommand{\groupG}{S_n}\eqref{eq:inv}} is equivalent to the linear invariance problem {\renewcommand{\groupH}{S_n}\renewcommand{\varepsilon}{1} \eqref{eq:liftinv}}. The solutions of the latter are surely included in the former (by e.g. Theorem \ref{th:main}) However, Proposition \ref{prop:structure}(ii) also tells us that the solutions of {\renewcommand{\groupG}{S_n}\eqref{eq:inv}} are included in the solution space of {\renewcommand{\groupH}{S_n}\eqref{eq:redinv}}. The commutator $\{S_n,S_n\}$ is the alternating group $A_n$, i.e., the group of permutations of sign $1$ (see Section \ref{sec:app-grouptheory}). Now, the linear invariance problem {\renewcommand{\groupH}{A_n}\renewcommand{\varepsilon}{1}\eqref{eq:liftinv}}  was treated in \citep{maron2018invariant}---it was shown that for $n\geq k+2$, the set of solutions of {\renewcommand{\groupH}{A_n}\renewcommand{\varepsilon}{1}\eqref{eq:liftinv}} is exactly equal to the one of {\renewcommand{\groupH}{S_n}\renewcommand{\varepsilon}{1}\eqref{eq:liftinv}} (in fact, the proof of that builds on the same idea as our proof of Proposition \ref{prop:permsign}). Hence, we conclude
    \begin{align*}
         \text{Sol. of }  {\renewcommand{\groupH}{S_n}\renewcommand{\varepsilon}{1}\eqref{eq:liftinv}} \subseteq \text{Sol. of } {\renewcommand{\groupG}{S_n}\eqref{eq:inv}} \subseteq \text{Sol. of } {\renewcommand{\groupH}{S_n}\eqref{eq:redinv}} = \text{Sol. of } {\renewcommand{\groupH}{A_n}\renewcommand{\varepsilon}{1}\eqref{eq:liftinv}}= \text{Sol. of } {\renewcommand{\groupH}{S_n}\renewcommand{\varepsilon}{1}\eqref{eq:liftinv}}.
    \end{align*}
\end{remark}

\subsection{The Vierergroup}

Let us begin by describing the character group of $\Z_2^2$. $\Z_2$ is an abelian group generated by two elements, $(1,0)$ and $(0,1)$. Any group homomorphism $\varepsilon$ is hence uniquely determined by its values $\varepsilon((1,0))$ and $\varepsilon((0,1))$. Since $(1,0)+(1,0)=0=(0,1)+(0,1)$, the values must be square roots of one, i.e., either $+1$ or $-1$. Hence, $\Z_n^*$ contains four elements $\epsilon_{{\scriptsize ++}}$, $\epsilon_{{\scriptsize +-}}$, $\epsilon_{{\scriptsize -+}}$, $\varepsilon_{--}$, defined through the following table:
\begin{align*}
    \begin{tabular}{c|cccc}
        $(\Z_2^2)^*$ $\backslash$ $\Z_2^2$ &  $(0,0)$ & $(1,0)$ &  $(0,1)$ & $(1,1)$ \\
        \midrule 
        $\varepsilon_{++}$ & $1$ & $1$ & $1$& $1$ \\
        $\varepsilon_{+-}$ & $1$ & $1$ & $-1$ & $-1$ \\
        $\varepsilon_{-+}$ & $1$ & $-1$ & $1$ & $-1$ \\
        $\varepsilon_{--}$ & $1$ & $-1$ & $-1$ & $1$
    \end{tabular}
\end{align*}
The appearance of the spaces $U^\varepsilon$ will depend on $\widehat{\rho}$. They are however simple to compute---given a vector $v \in V$ and an $\varepsilon \in (\Z_2^2)^*$, the element
\begin{align*}
    v_\varepsilon = \sum_{h \in \Z_2^2} \varepsilon(h)^{-1} \widehat{\rho}(h)v
\end{align*}
is in $U^\varepsilon$\footnote{This strategy is generally applicable. It will be quite inefficient if the group has many elements, though.}
\begin{align*}
    \widehat{\rho(k)}v_{\varepsilon} = \sum_{h \in \Z_2^2} \varepsilon(h)^{-1}\widehat{\rho}(kh)v = \lceil kh= \ell\rceil = \sum_{\ell \in \Z_2^2} \varepsilon(k^{-1}\ell)^{-1} \widehat{\rho}(\ell)v = \varepsilon(k) v_{\varepsilon}. 
\end{align*}
Consequently, given any basis of $V$, we may construct a spanning set of $U^\varepsilon$ via calculating the above vector for each basis vector, and subsequently extract a basis.
Applying this strategy to the example of the Vierergroup acting through flipping on the spaces of $3 \times 3$ convolutional filters yields the bases depicted in Figure \ref{fig:viererbasis}. 

\begin{figure}[t]
    \center
    \includegraphics[width=.65\textwidth]{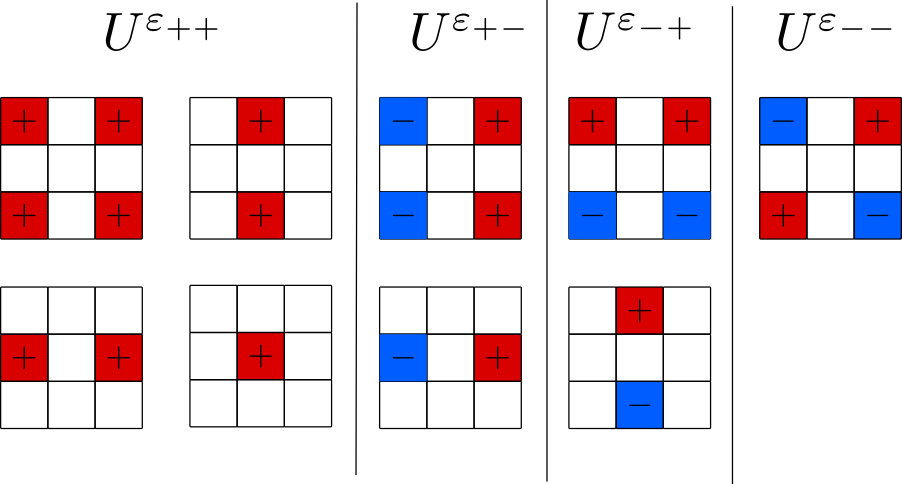}
    \caption{The bases for the spaces $U^\varepsilon$ for the action of $\Z_2^2$ on the space $C$ of $3\times3$ filters.} \label{fig:viererbasis}
\end{figure}

\subsection{A (not very useful) canonical lifting procedure} \label{sec:app-canonicallift}
A natural question is whether every projective representation $\rho$ can be lifted to a linear one. In fact, this can be done, if one allows the group $H$ to depend on $\rho$.
\begin{definition}
    Let $\rho: G \to \PGL(V)$ be a projective representation. We define
    \begin{align*}
        H_\rho = \{(g,A) \, \vert \, A \in \rho(g)\} \sse G \times \mathrm{GL}(V).
    \end{align*}
\end{definition}
\begin{proposition}
    $H_\rho$ is a subgroup of $G \times \mathrm{GL(V)}$, and a covering group of $G$. The covering map is given by
    \begin{align*}
        \varphi(g,A) = g.
    \end{align*}
\end{proposition}
\begin{proof}
    The subgroup property follows from the fact that $\rho$ is a projective representation -- if $(g,A)$, $(h,B)$ is in $H_\rho$, it per definition means that $A\in \rho(g)$ and $B \in \rho(h)$. However, then $$AB \in\rho(g)\rho(h)=\rho(gh),$$ i.e. $(g,A)\cdot(h,B) = (gh,AB) \in H_\rho$. 
    
    That $\varphi$ is continuous, surjective and maps a neighbourhood of the identity $(e,\id)\in H_\rho$ homeomorphically to a neighbourhood of $e$ in $G$ is clear. Likewise clear is that $\varphi$ is a group homomorphism:
    \begin{align*}
       \varphi((g,A)\cdot(h,B)) = \varphi((gh,AB)) =gh = \varphi(g,A)\varphi(g,B).
    \end{align*}
\end{proof}
We can now lift $\rho$ to a representation $\widehat{\rho}$ on $H_\rho$ as follows:
\begin{align*}
    \widehat{\rho} : H_\rho \to \GL(V) , (g,A) \to A.
\end{align*}
Indeed,
\begin{align*}
    \widehat{\rho}(g,A) = A \in \rho(g) = \rho(\varphi(g,A)),
\end{align*}
which is the same as saying that $\rho \circ \varphi = \Pi_{\GL(V)} \circ \widehat{\rho}$

It is now however important to note that although this construction is theoretically pleasing, using it in conjunction with Proposition \ref{th:main} does not give much additional insight. Said proposition states that $v$ solves \eqref{eq:inv} if and only if it is the equivalence class of some $x\in U^\varepsilon$ for some $\varepsilon \in H_\rho^*$. However,  $x$ being in $U^\varepsilon$ means that
\begin{align}
    Ax = \widehat{\rho}((g,A))x = \epsilon(g,A)x 
\end{align}
for every $g \in G$ and $A\in \rho(g)$ -- that is, for every $A$ in the equivalence class of $g$, $Ax$ is equal to $x$ times some constant that may depend on both $A$ and $g$, which is just a reformulation of \eqref{eq:inv}.

For more information about this canonical lift we refer to the textbook \citep{kirillov2012elements} and the historical exposition \citep{hirai2013introductory}.
The cases where it becomes useful is when $H_\rho$ is a known group with known representations and in particular when $H_\rho = H$ is independent of $\rho$. This is the case for instance for the lift of representations of $\SO(n)$ to $\SU(n)$. 

 \subsection{Group theoretic facts} \label{sec:app-grouptheory}
 Here we, out of convenience for the reader, present proofs of some well-known group theoretical facts.
    

\begin{lemma}
    Let $H$ be a compact group and $\widehat{\rho}$ a representation of $H$ on a finite-dimensional space $V$. Then there exist an scalar product on $V$ with respect to which all $\widehat{\rho}(h)$, $h\in H$, are unitary.
\end{lemma}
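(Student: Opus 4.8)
The plan is to use the standard averaging trick (sometimes called Weyl's unitarian trick) to manufacture an invariant inner product from an arbitrary one. First I would fix any inner product $\langle\cdot,\cdot\rangle_0$ on the finite-dimensional space $V$; since $V$ is finite-dimensional such an inner product certainly exists (e.g.\ by choosing a basis and declaring it orthonormal). The idea is then to average the ``pulled-back'' inner products $\langle\widehat{\rho}(h)u,\widehat{\rho}(h)v\rangle_0$ over all group elements $h\in H$, using the fact that $H$ is compact to integrate against a normalized Haar measure $\mu$.

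\textbf{Construction of the invariant inner product.} Concretely, I would define
\begin{align*}
    \langle u, v\rangle = \int_H \langle \widehat{\rho}(h)u, \widehat{\rho}(h)v\rangle_0 \,\dd\mu(h),
\end{align*}
where $\mu$ is the Haar probability measure on the compact group $H$. I would then verify the three things this construction requires. First, that the integral is well-defined: the integrand is continuous in $h$ (since $\widehat{\rho}$ is assumed continuous and the inner product is continuous), and $H$ is compact with finite total measure, so the integral converges. Second, that $\langle\cdot,\cdot\rangle$ is genuinely an inner product --- bilinearity (or sesquilinearity in the complex case) and symmetry are inherited from $\langle\cdot,\cdot\rangle_0$ under the integral, and positive-definiteness holds because $\langle u,u\rangle = \int_H |\widehat{\rho}(h)u|_0^2\,\dd\mu(h)\geq 0$, with equality forcing $\widehat{\rho}(h)u=0$ for almost every $h$; taking $h=e_H$ (or using continuity) then gives $u=0$.

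\textbf{Unitarity.} The final step is to check invariance. For any fixed $g\in H$, I would compute, using the substitution $h\mapsto hg$ and the right-invariance of Haar measure,
\begin{align*}
    \langle \widehat{\rho}(g)u, \widehat{\rho}(g)v\rangle &= \int_H \langle \widehat{\rho}(h)\widehat{\rho}(g)u, \widehat{\rho}(h)\widehat{\rho}(g)v\rangle_0 \,\dd\mu(h) \\
    &= \int_H \langle \widehat{\rho}(hg)u, \widehat{\rho}(hg)v\rangle_0 \,\dd\mu(h) = \langle u, v\rangle,
\end{align*}
where the first equality uses that $\widehat{\rho}$ is a homomorphism and the last uses invariance of the Haar measure. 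This shows each $\widehat{\rho}(g)$ preserves the new inner product, i.e.\ is unitary with respect to it.

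The main obstacle here is not really a deep one but rather the need to invoke the existence and invariance of Haar measure on a compact group --- this is the one nontrivial external ingredient, and I would cite it as a standard fact (every compact group carries a unique bi-invariant probability measure). Given that, all remaining steps are routine verifications, and the compactness of $H$ is exactly what guarantees the averaging integral converges and that the Haar measure can be normalized to total mass one.
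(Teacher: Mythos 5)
Your proposal is correct and follows essentially the same route as the paper: averaging an arbitrary inner product over the normalized Haar measure of the compact group, verifying well-definedness, sesquilinearity, and positive-definiteness, and then using invariance of the Haar measure under translation to conclude unitarity. The only cosmetic difference is that you invoke right-invariance for the substitution $h\mapsto hg$ while the paper states left-invariance, but on a compact group the Haar measure is bi-invariant, so both arguments are identical in substance.
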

\begin{proof}
    Since $H$ is compact, there exists a unique normalized Haar measure $\mu$ on $H$. That is, $\mu$ is a measure with the property that $\mu(h^{-1}A)=\mu(A)$ for all $h\in H$ and Borel measureable $A\sse H$. Given any scalar product $\langle \, \cdot \, , \, \cdot \, \rangle$ on $V$, we may now define
    \begin{align*}
        \sprod{v,w}_H = \int_{H} \sprod{\widehat{\rho}(h)v,\widehat{\rho}(h)w} \mathrm{d} \mu(h).
    \end{align*}
    This is a well-defined expression due to $V$ being finite-dimensional and $H$ compact (and hence, $h \mapsto\sprod{\widehat{\rho}(h)v,\widehat{\rho}(h)w} $ is a continuous, bounded function). It is easy to show that it is sesqui-linear. To show that it is definite, note that if  $0=\sprod{v,v}_H$, $\sprod{\widehat{\rho}(h)v,\widehat{\rho}(h)v}$ is a continuous function in $h$ zero almost everywhere, i.e., zero. This means that $\sprod{\widehat{\rho}(h)v,\widehat{\rho}(h)v}=0$ for all $h$, which immediately implies that $v$ is zero.

    We now claim that any $\widehat{\rho}(k)$ is unitary with respect to this inner product. We have
    \begin{align*}
        \sprod{\widehat{\rho}(k)v,\widehat{\rho}(k)w}_H &= \int_{H} \sprod{\widehat{\rho}(h)\widehat{\rho}(k)v,\widehat{\rho}(h)\widehat{\rho}(k)w} \mathrm{d} \mu(h) = \int_{H} \sprod{\widehat{\rho}(hk)v,\widehat{\rho}(hk)w} \mathrm{d} \mu(h) \\&= \lceil h'=hk \rceil \int_{H} \sprod{\widehat{\rho}(h')v,\widehat{\rho}(h')w} \mathrm{d} \mu(h') = \sprod{v,w}_H,
    \end{align*}
    where we used the invariance of the Haar measure in the substitution step.
\end{proof}

\begin{lemma}
    $S_n^*=\{1,\sgn\}$.
\end{lemma}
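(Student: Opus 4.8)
The plan is to exploit the fact that a character takes values in the abelian multiplicative group $\F\setminus\{0\}$, so it must be constant on conjugacy classes and is pinned down by its values on any generating set. I would first observe that since $S_n$ is finite, it carries the discrete topology and the continuity requirement in the definition of $S_n^*$ is automatic, imposing no restriction.

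First I would note that for any $\varepsilon\in S_n^*$ and any $g,h\in S_n$ we have $\varepsilon(ghg^{-1})=\varepsilon(g)\varepsilon(h)\varepsilon(g)^{-1}=\varepsilon(h)$, because the target $\F\setminus\{0\}$ is commutative; hence $\varepsilon$ is constant on each conjugacy class of $S_n$. The key input is then the standard fact that all transpositions lie in a single conjugacy class (conjugating $(a\,b)$ by $\pi$ yields $(\pi(a)\,\pi(b))$, and any pair can be sent to any other). Consequently there is a single common value $t:=\varepsilon(\tau)$ attained on every transposition $\tau$.

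Next, because each transposition is an involution, $t^2=\varepsilon(\tau^2)=\varepsilon(e)=1$, and the only square roots of unity in $\F$ (for $\F=\R$ or $\C$) are $\pm 1$, so $t\in\{+1,-1\}$. Since the transpositions generate $S_n$, the character $\varepsilon$ is completely determined by $t$: writing an arbitrary $\sigma$ as a product of $m$ transpositions gives $\varepsilon(\sigma)=t^m$. The case $t=1$ yields $\varepsilon\equiv 1$, while $t=-1$ yields $\varepsilon(\sigma)=(-1)^m=\sgn(\sigma)$, where well-definedness of $m\bmod 2$ is exactly the well-definedness of the sign. This exhausts the possibilities and gives $S_n^*=\{1,\sgn\}$.

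There is no serious obstacle here; the only point requiring a little care is verifying that the common transposition value must be a square root of unity (and hence $\pm 1$) and that both choices actually extend consistently to a homomorphism on all of $S_n$. Both facts follow immediately once one knows that the transpositions form a single conjugacy class and generate the group, so the argument is essentially a direct computation on generators.
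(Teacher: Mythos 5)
Your proof is correct and follows essentially the same route as the paper's: both arguments reduce to showing that a character is determined by its values on transpositions, that each such value squares to $1$ and is hence $\pm 1$, and that all transpositions must receive the same value. The only cosmetic difference is that you derive the last point from the general fact that characters are constant on conjugacy classes, whereas the paper verifies it by writing out the explicit conjugation identities $(ik)=(ij)(jk)(ij)$ and $(k\ell)=(ik)(j\ell)(ij)(ik)(j\ell)$.
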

\begin{proof} Let $\varepsilon \in S_n^*$ be arbitrary. Since $S_n$ is generated by transpositions $\tau=(ij)$, $\varepsilon$ is uniquely determined by its values of them. Since $\tau^2=1$, we must have $\varepsilon(\tau)^2=1$, meaning that each value $\varepsilon(\tau)$ is either $+1$ or $-1$. If we can prove that they must all have the same value, we are done: if $\varepsilon$ maps the transpositions to $+1$, $\varepsilon$ is the character $1$, and if it maps them to $-1$, it is $\sgn$.

    So let $\tau$ and $\tau'$ be to transpositions. We distinguish two cases.

    \underline{Case 1:} $\tau$ and $\tau'$ share an element, i.e., $\tau = (ij)$ and $\tau'=(ik)$. Since $(ik) = (ij)(jk)(ij)$, we must however have $\varepsilon(\tau) = \varepsilon(\tau') \cdot \varepsilon(ij)^2 = \varepsilon(\tau')$.

    \underline{Case 2:} $\tau$ and $\tau'$ do not share an element, i.e., $\tau =(ij)$ and $\tau'=(k\ell)$. Since  $(k\ell) = (ik)(j\ell)(ij)(ik)(j\ell)$, we have $\varepsilon(\tau')=\varepsilon(\tau) \varepsilon(ik)^2\varepsilon(j\ell)^2 = \varepsilon(\tau)$.
\end{proof}

 \begin{lemma} \label{lem:commperm}
    $\{S_n,S_n\}=A_n$.
 \end{lemma}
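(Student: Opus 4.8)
The plan is to prove the two inclusions $\{S_n,S_n\}\subseteq A_n$ and $A_n\subseteq\{S_n,S_n\}$ separately; for $n\leq 2$ both sides are trivial (as $S_n$ is then abelian), so I focus on $n\geq 3$.

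The inclusion $\{S_n,S_n\}\subseteq A_n$ is immediate from the sign homomorphism. Since $\sgn:S_n\to\{\pm1\}$ takes values in an abelian group, any commutator satisfies $\sgn(\{g,h\})=\sgn(g)\sgn(h)\sgn(g)^{-1}\sgn(h)^{-1}=1$, so every commutator lies in $A_n=\ker\sgn$. As $A_n$ is a subgroup, the subgroup generated by all commutators, namely $\{S_n,S_n\}$, is contained in $A_n$.

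For the reverse inclusion I would proceed in two steps. First, every $3$-cycle is a commutator: for distinct $i,j,k$ a direct computation (using that transpositions are involutions) gives $\{(ij),(ik)\}=(ij)(ik)(ij)(ik)=(ijk)$. Second, $A_n$ is generated by $3$-cycles: any element of $A_n$ is a product of an even number of transpositions, and grouping these into consecutive pairs, each pair is either the identity (when the transpositions coincide), a single $3$-cycle when they share exactly one index, since $(ab)(ac)=(acb)$, or a product of two $3$-cycles when they are disjoint, since $(ab)(cd)=(abc)(bcd)$. Combining the two steps, every element of $A_n$ is a product of $3$-cycles, each of which is a commutator, whence $A_n\subseteq\{S_n,S_n\}$.

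The only genuine work lies in the second step, namely the case analysis for the product of a pair of transpositions, but this is a routine finite check and I expect no real obstacle; the two inclusions together give $\{S_n,S_n\}=A_n$.
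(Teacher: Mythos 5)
Your proof is correct and follows essentially the same route as the paper: the forward inclusion via the sign homomorphism, and the reverse inclusion by expressing each $3$-cycle as a commutator of two transpositions. The only difference is that you also supply the standard argument that $A_n$ is generated by $3$-cycles (pairing up transpositions), a fact the paper simply cites.
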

 \begin{proof}

 This proof  can be found in e.g. \citep{commutators}.

     That $\{S_n,S_n\}\sse A_n$ follows from the fact that all commutators have signature $1$:
     \begin{align*}
         \sigma(\tau\circ \pi\circ \tau^{-1}\circ \pi^{-1}) = \sigma(\tau)\sigma(\pi) \sigma(\tau)^{-1}\sigma(\pi^{-1})=1.
     \end{align*}
     To prove the converse, let us first note that the cases $n=1,2$ are trivial---in both those cases, $A_n = \{\id\}$, but $S_n$ is also abelian, so that $\{S_n,S_n\}=\{\id\}$. For $n\geq 3$, we use the well-known fact that $A_n$ is generated by $3$-cycles \citep[Cor. 6.3]{leealgebra} $(ijk)$. If we show that they are commutators, we are hence done. But
     \begin{align*}
         (ijk)= (jk)(ij)(jk)(ij),
     \end{align*}
     so that the claim follows.
 \end{proof}

 \begin{lemma}
    $\SU(2)$ is perfect.
 \end{lemma}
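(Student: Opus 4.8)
The goal is to show that $\SU(2)$ is perfect, i.e., that $\{\SU(2),\SU(2)\} = \SU(2)$. Since $\{\SU(2),\SU(2)\}$ is always a subgroup and a subset of $\SU(2)$, the content is the reverse inclusion: every element of $\SU(2)$ must be expressible as a product of commutators. The plan is to exploit the fact that $\SU(2)$ is a connected compact Lie group, so its commutator subgroup is a closed connected Lie subgroup whose Lie algebra is the derived algebra $[\galg,\galg]$ with $\galg=\mathfrak{su}(2)$. Hence it suffices to verify that $\mathfrak{su}(2)$ is perfect as a Lie algebra, which is a small explicit computation.

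First I would recall an explicit basis of $\mathfrak{su}(2)$, namely the anti-Hermitian traceless matrices obtained from the Pauli matrices, $X_1 = \tfrac{1}{2}\bigl(\begin{smallmatrix} i & 0 \\ 0 & -i\end{smallmatrix}\bigr)$, $X_2 = \tfrac{1}{2}\bigl(\begin{smallmatrix} 0 & 1 \\ -1 & 0\end{smallmatrix}\bigr)$, $X_3 = \tfrac{1}{2}\bigl(\begin{smallmatrix} 0 & i \\ i & 0\end{smallmatrix}\bigr)$. A direct check gives the commutation relations $[X_1,X_2]=X_3$, $[X_2,X_3]=X_1$, $[X_3,X_1]=X_2$. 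Consequently each basis vector lies in $[\mathfrak{su}(2),\mathfrak{su}(2)]$, so $[\mathfrak{su}(2),\mathfrak{su}(2)]=\mathfrak{su}(2)$; the Lie algebra is perfect (indeed simple). Then I would invoke the standard structural fact that for a connected Lie group $G$ the derived subgroup's Lie algebra equals $[\galg,\galg]$, so that $\{\SU(2),\SU(2)\}$ contains a neighbourhood of the identity; since $\SU(2)$ is connected it is generated by any such neighbourhood, giving $\{\SU(2),\SU(2)\}=\SU(2)$.

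An elementary, Lie-theory-free alternative I would keep in reserve is to argue directly at the group level using the conjugacy structure of $\SU(2)$. Every element of $\SU(2)$ is conjugate to a diagonal matrix $D_\theta=\diag(e^{i\theta},e^{-i\theta})$, and one checks that $D_\theta = g D_{\theta/2} g^{-1} D_{\theta/2}^{-1}$ is a single commutator for a suitable $g\in\SU(2)$ (taking $g$ to be a rotation swapping the two eigendirections, which conjugates $D_{\theta/2}$ to $D_{-\theta/2}$). Since conjugates of commutators are commutators and $\{\SU(2),\SU(2)\}$ is normal, this shows every element of $\SU(2)$ is a commutator, hence $\SU(2)$ is perfect.

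The main obstacle is not the computation itself but choosing the cleanest justification: the Lie-algebraic route needs the (standard but nontrivial) theorem identifying the Lie algebra of the derived group, while the direct route needs the explicit verification that conjugation by some $g$ inverts a diagonal matrix and that an arbitrary element is conjugate into the maximal torus. Either way the only genuine verification is the bracket relations (or the single-commutator identity), both of which are routine $2\times 2$ matrix calculations; I would present the direct group-level argument, since it is self-contained and matches the elementary style used for the $S_n$ and $\SO(3)$ cases elsewhere in the paper.
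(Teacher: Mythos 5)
Your preferred group-level argument is correct and is, at its core, the same trick the paper uses: write $q=r^2$ for a square root $r$, find an $s$ conjugating $r$ to $r^{-1}$, and conclude $q=rsr^{-1}s^{-1}$. The paper realises this with unit quaternions (taking $s=(0,v)$ with $v$ orthogonal to the vector part of $r$), while you realise it by conjugating into the maximal torus and letting $g=\bigl(\begin{smallmatrix}0&1\\-1&0\end{smallmatrix}\bigr)$ swap the eigenvalues; these are two parametrisations of the same computation, and your version arguably needs one extra (easy) observation, namely that the diagonalising unitary can be normalised to lie in $\SU(2)$. One small slip: with $gD_{\theta/2}g^{-1}=D_{-\theta/2}$ your displayed commutator equals $D_{-\theta}$ rather than $D_\theta$, but since $\theta$ is arbitrary (or since $D_{-\theta}$ is conjugate to $D_\theta$) this is harmless. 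Your Lie-algebraic alternative is a genuinely different and more general route --- it shows perfectness of any connected Lie group with perfect Lie algebra --- but it leans on the nontrivial identification of the derived subgroup's Lie algebra with $[\mathfrak{g},\mathfrak{g}]$, which is why the elementary argument is the better fit for this paper's self-contained appendix.
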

 \begin{proof}
     For this proof, we are going to use that elements in $\SU(2)$ can be thought of as unit quaternions $q=(\alpha,v)$, where $\alpha \in \R$ is the real part of the quaternion, and $v\in \R^3$ its vector part. $q$ being of unit norm means that $\alpha^2 + |v|^2 = 1$ The group action is given by the quaternion multiplication, i.e.,
     \begin{align*}
         (\alpha,v) \cdot (\beta,w) = (\alpha \beta - v \cdot w, \alpha v +\beta w + v \times w),
     \end{align*}
     where $\cdot$ and $\times$ are the standard dot and cross products of $\R^3$. We need to show that all unit quaternions can be generated by commutators. We will do this in three steps.

     \underline{Step 1.} Each unit norm quaternion $q$ has a square root $r$, i.e., an $r$ with $r^2=q$. This can be seen through direct calculation: Each unit norm quaternion can be written as $(\cos(2\theta), \sin(2\theta) u)$ for a $\theta \in [0, \pi]$ and a unit norm $u$. If we then define $r=(\cos(\theta), \sin(\theta)u)$, we have
     \begin{align*}
         r^2 &= (\cos^2(\theta) - \sin^2(\theta), \cos(\theta)\sin(\theta)u + \cos(\theta)\sin(\theta)u + \sin^2(\theta) u \times u) \\
         &= (\cos(2\theta),\sin(2\theta)u) = q.
     \end{align*}

     \underline{Step 2.} Write $r = (\alpha, \beta u)$, let $v$ be a unit norm ector orthogonal to $u$ and $w=u \times v$. If we define $s= (0,v)$, we have
     \begin{align*}
         sr^{-1}s^{-1} &= (0,v)(\alpha,-\beta u)(0,-v) = (0+ \beta v \cdot u, \alpha v - \beta v \times u)(0,-v) = (0,\alpha v + \beta w)(0,-v) \\
                        &= (\alpha |v|^2 - \beta w \cdot v, - \alpha v \times v - \beta w \times v) = (\alpha, \beta u) = r,
     \end{align*}
     i.e., $r=sr^{-1}s^{-1}$

     \underline{Step 3.} We now combine the above steps
     \begin{align*}
         q = r^2 = rsr^{-1}s^{-1} = \{r,s\}.
     \end{align*}
     Hence, $q$ is a commutator, and the claim has been proven.
 \end{proof}
    \begin{corollary}
        $\SO(3)$ is perfect.
    \end{corollary}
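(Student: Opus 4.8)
The plan is to deduce perfectness of $\SO(3)$ from the perfectness of $\SU(2)$ just established, by transporting the commutator structure along the covering homomorphism. Recall that $\SU(2)$ is a covering group of $\SO(3)$, so there is a surjective continuous group homomorphism $\varphi:\SU(2)\to\SO(3)$. The only algebraic fact I need is that any group homomorphism sends commutators to commutators, i.e.\ $\varphi(\{r,s\})=\{\varphi(r),\varphi(s)\}$ for all $r,s\in\SU(2)$, which is immediate from $\varphi(rs)=\varphi(r)\varphi(s)$ and $\varphi(r^{-1})=\varphi(r)^{-1}$.

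With this in hand the argument is essentially a one-liner. Let $g\in\SO(3)$ be arbitrary. By surjectivity of $\varphi$ there is some $q\in\SU(2)$ with $\varphi(q)=g$. The preceding lemma shows not merely that $\SU(2)$ is perfect but that \emph{every} element is a single commutator, so we may write $q=\{r,s\}$ for suitable $r,s\in\SU(2)$. Applying $\varphi$ then gives
\[
    g = \varphi(q) = \varphi(\{r,s\}) = \{\varphi(r),\varphi(s)\} \in \{\SO(3),\SO(3)\}.
\]
Since $g$ was arbitrary, $\SO(3)=\{\SO(3),\SO(3)\}$, which is the definition of perfectness.

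There is no real obstacle here: all the genuine work was done in constructing quaternionic square roots and the conjugation identity $sr^{-1}s^{-1}=r$ in the proof that $\SU(2)$ is perfect. The only point warranting a word of care is that one must invoke the surjectivity of $\varphi$ onto \emph{all} of $\SO(3)$, which is part of the standing fact that $\SU(2)$ double covers $\SO(3)$; given that, the statement follows formally. More generally this records the standard observation that perfectness is preserved under surjective homomorphisms, so the corollary is really just an instance of that principle applied to $\varphi$.
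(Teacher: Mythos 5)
Your proof is correct and follows essentially the same route as the paper's: lift an arbitrary $R\in\SO(3)$ along the surjective covering map $\varphi:\SU(2)\to\SO(3)$, write the lift as a single commutator using the preceding lemma, and push the commutator forward through $\varphi$. The extra remarks about surjectivity and preservation of perfectness under surjective homomorphisms are accurate but do not change the substance of the argument.
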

    \begin{proof}
        The perfectness follows from the perfectness of its cover $SU(2)$. If $\varphi: \SU(2)\to\SO(3)$ is a covering map, write any $R\in \SO(3)$ as $\varphi(r)$ for an $r\in \SU(2)$. We just proved that there exists $s,t\in \SU(2)$ with $r = \{s,t\}$. But this implies
        \begin{align*}
            R = \varphi(r) = \varphi(\{s,t\}) = \{\varphi(s),\varphi(t)\}.
        \end{align*}
    \end{proof}

\begin{lemma}
    $\Z_n^*$ is isomorphic to the set of $n$-th roots of unity.
\end{lemma}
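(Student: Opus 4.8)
The plan is to exhibit an explicit isomorphism between the character group $\Z_n^*$ and the group $\mu_n$ of $n$-th roots of unity, both viewed as abelian groups under their respective operations (pointwise multiplication of characters and ordinary multiplication of complex numbers). First I would observe that since $\Z_n$ is cyclic, generated by $1$, any character $\varepsilon \in \Z_n^*$ is completely determined by the single value $\omega := \varepsilon(1)$, because $\varepsilon(k) = \varepsilon(1)^k = \omega^k$ by the homomorphism property. This reduces the whole structure of $\Z_n^*$ to understanding the allowed values of $\omega$.

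Next I would pin down exactly which $\omega$ can occur. The key constraint is that $n$ times the generator is the identity in $\Z_n$, i.e.\ $n \cdot 1 = 0$, so the homomorphism property forces $\omega^n = \varepsilon(n \cdot 1) = \varepsilon(0) = 1$. Hence $\omega$ must be an $n$-th root of unity. Conversely, for \emph{any} root of unity $\omega$ with $\omega^n = 1$, the formula $\varepsilon_\omega(k) = \omega^k$ is well-defined on $\Z_n$ (the value is independent of the representative of $k$ modulo $n$, precisely because $\omega^n = 1$) and is easily checked to be a genuine group homomorphism into $\F\setminus\{0\}$, hence a character. This establishes a bijection between $\Z_n^*$ and $\mu_n$ via $\varepsilon \mapsto \varepsilon(1)$, with inverse $\omega \mapsto \varepsilon_\omega$.

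Finally I would verify that this bijection respects the group operations. If $\varepsilon, \gamma \in \Z_n^*$ correspond to roots $\omega = \varepsilon(1)$ and $\zeta = \gamma(1)$, then their product in the character group satisfies $(\varepsilon\gamma)(1) = \varepsilon(1)\gamma(1) = \omega\zeta$, so the map sends products to products; it is therefore a group isomorphism. One subtle point worth noting is whether \emph{all} $n$-th roots of unity actually live in the field $\F$: over $\C$ there are exactly $n$ of them, but over $\R$ only $\pm 1$ (and only $+1$ when $n$ is odd\ldots), which is exactly the field-dependent behaviour the main text already flagged for the translation group. I would therefore state the result as an isomorphism onto the roots of unity \emph{contained in} $\F$, which matches the discussion in Section~\ref{sec:structure}.

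I do not expect any genuine obstacle here: the statement is a standard fact about characters of cyclic groups, and the only thing requiring a little care is the well-definedness of $\varepsilon_\omega$ on $\Z_n$ (needing $\omega^n = 1$) and the continuity requirement built into the definition of $\Z_n^*$, which is automatic since $\Z_n$ is finite and hence discrete. The whole argument is essentially the evaluation-at-the-generator map, and the main content is simply checking it is a well-defined, bijective homomorphism.
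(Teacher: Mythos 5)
Your proof is correct and follows essentially the same route as the paper's: evaluation at the generator $1$, the constraint $\varepsilon(1)^n = \varepsilon(0) = 1$ forcing $\varepsilon(1)$ to be an $n$-th root of unity, the converse construction $\varepsilon_\omega(k)=\omega^k$, and the identity $\varepsilon_\omega\varepsilon_{\omega'}=\varepsilon_{\omega\omega'}$ for the isomorphism. Your added remarks on well-definedness modulo $n$ and on the field-dependence of which roots exist in $\F$ are sound and consistent with the paper's discussion of the translation group in Section~\ref{sec:structure}.
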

\begin{proof}
    $\Z_n$ is an abelian group, generated by the element $1$. Consequently, any group homomorphism is uniquely determined by the value $\epsilon(1)$. Since $1 = \epsilon(0) = \epsilon(\underbrace{1+ \dots +1}_{n \text{ times}}) = \epsilon(1)^n$, $\epsilon(1)$ must be a root of unity. Hence, all elements of $Z_n^*$ is of the form $\epsilon_\omega(k)=\omega^k$ with $\omega$ a root of unity. Since that surely defines a group homomorphism, $\Z_n^* = \{\varepsilon_\omega, \omega^n=1\}$. As for the isomorphy, we simply need to remark that $\varepsilon_\omega \varepsilon_{\omega'}=\varepsilon_{\omega\omega'}$ by direct calculation.
\end{proof}

\begin{lemma} \label{lem:perfect}
    If $G$ is perfect, $G^*$ only contains the trivial character $1$.
\end{lemma}
\begin{proof}
    Let $k=\{h,\ell\}$ be a commutator and $\varepsilon \in G^*$. Then
    \begin{align*}
        \varepsilon(k) = \varepsilon(h\ell h^{-1}\ell^{-1}) = \varepsilon(h)\varepsilon(\ell)\varepsilon(h)^{-1}\varepsilon(\ell)^{-1} =1
    \end{align*}
    Since $\varepsilon$ is a group homomorphism, this shows that $\varepsilon$ is equal to $1$ on the entire commutator subgroup $\{G,G\}$, and thus, since $G$ is perfect, on the entirety of $G$.
\end{proof}

In fact, to some extent, the converse statement of the last lemma holds. We include the proof for completeness.
 
\begin{lemma}
    Let $\F=\C$. If $G$ is a compact group with $G^*=\{1\}$, $G$ is perfect.
\end{lemma}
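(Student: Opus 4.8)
The plan is to prove the contrapositive via the abelianization of $G$. Suppose $G^*=\{1\}$; I want to deduce that $G$ equals its commutator subgroup. First I would pass to the closed commutator subgroup $N=\overline{\{G,G\}}$, which is a closed normal subgroup, and form the quotient $Q=G/N$. Every commutator dies in $Q$, so $Q$ is abelian, and since $G$ is compact and $N$ is closed, $Q$ is a compact (Hausdorff) abelian group. The entire argument then reduces to showing that $Q$ is trivial, since $Q=\{e\}$ says exactly that $\overline{\{G,G\}}=G$.

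The core step is to rule out $Q$ being nontrivial. Here I would invoke the fact that a nontrivial compact abelian group possesses a nontrivial continuous character into the unit circle $\{z\in\C:|z|=1\}$. This is Pontryagin duality, and it may equivalently be extracted from the Peter--Weyl theorem in the same spirit as the rest of this paper: a nontrivial compact group carries a nontrivial finite-dimensional irreducible unitary representation, which for an abelian group must be one-dimensional and hence is a character. Given such a nontrivial character $\chi:Q\to\C\setminus\{0\}$ and the quotient map $\pi:G\to Q$, the composite $\chi\circ\pi$ is a continuous homomorphism $G\to\C\setminus\{0\}$ that is nontrivial, since $\pi$ is surjective and $\chi$ is nontrivial. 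Thus $\chi\circ\pi$ is a nontrivial element of $G^*$, contradicting the hypothesis. Therefore $Q$ is trivial and $\overline{\{G,G\}}=G$.

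The main obstacle is the gap between $\overline{\{G,G\}}=G$ and the literal equality $\{G,G\}=G$ in the definition of perfectness: for a general compact group the algebraic commutator subgroup need not be closed, so the character argument only yields density of $\{G,G\}$ in $G$. For the compact connected Lie groups relevant here (notably $\SU(2)$ and $\SO(3)$) the commutator subgroup is in fact already closed, so the two notions agree and the conclusion is the stated one; I would either restrict to this setting or adopt the topological convention that $G$ is perfect when $\overline{\{G,G\}}=G$. A secondary point worth recording is the reduction of characters of compact groups to the unit circle: $g\mapsto|\varepsilon(g)|$ is a continuous homomorphism into $\R_{>0}$ with compact image, hence trivial, so every $\varepsilon\in G^*$ automatically takes values in the circle, which is what makes the pullback of $\chi$ legitimately lie in $G^*$.
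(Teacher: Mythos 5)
Your proof is correct and follows the same essential route as the paper's: pass to the abelianization, observe that it is a compact abelian group with trivial character group, and invoke Pontryagin duality (equivalently, the fact that a nontrivial compact abelian group admits a nontrivial continuous character) to conclude that the abelianization is trivial. The one substantive difference is that you quotient by the closure $\overline{\{G,G\}}$, whereas the paper quotients by the algebraic commutator subgroup $\{G,G\}$ itself without addressing whether it is closed; since Pontryagin duality is a statement about locally compact \emph{Hausdorff} abelian groups, the quotient must be Hausdorff, so your version is the technically cleaner one. The price, as you correctly note, is that the duality argument then only yields $\overline{\{G,G\}}=G$, i.e.\ density of the commutator subgroup, and an extra ingredient (closedness of the commutator subgroup, which holds e.g.\ for compact connected groups and in particular for the Lie groups $\SU(2)$ and $\SO(3)$ relevant to this paper, but can fail for general compact groups) is needed to upgrade density to literal perfectness. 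The paper's proof silently elides this point, so your explicit flagging of the gap --- and your observation that every $\varepsilon\in G^*$ automatically takes values in the unit circle, which legitimizes pulling back the circle-valued character $\chi$ --- is added care rather than a defect in your argument.
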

\begin{proof}
    $\{G,G\}$ is a \emph{normal} subgroup  of $G$, meaning that if $h\in G$ and $g \in \{G,G\}$, the conjugation $hgh^{-1}$ is also ($\{h,g\}= hgh^{-1}g^{-1}$ is surely in $\{G,G\}$, and therefore also $hgh^{-1} = \{h,g\}g$). This implies that we can define the quotient group $A=G/\{G,G\}$, which is defined via identifying elements $h,k \in G$ for which $hk^{-1}\in \{G,G\}$. The group operation is inherited from $G$: For two equivalence classes $[h]$ and $[k]$, its product is defined as $[h][k]=[hk]$. Also, an $\varepsilon \in G^*$ defines an element of $A^*$. Indeed, any $\varepsilon \in G^*$ must by the proof of Lemma \ref{lem:perfect}  be identically equal to $1$ on $\{G,G\}$. This has the consequence that if $h$,$k$ are in the same equivalence class,
    \begin{align*}
        \varepsilon(h)=\varepsilon(hk^{-1}k)=\varepsilon(hk^{-1})\varepsilon(k)=\varepsilon(k),
    \end{align*}
    so that $\varepsilon$ is well-defined on $A$.
    
    The group $A$ is called the \emph{Abelianization} of $G$. The naming stems from the fact that it is an abelian group---for any elements $[h]$, $[k]$ of $A$, $\{[h],[k]\} = [\{h,k\}]=[e]$, due to the definition of $A$. Since $G$ is compact, $A$ surely also is. 
    
    Since $G^*$ only contains the trivial element, $A^*$ also does. Since $A$ is an abelian compact group, we may now apply the \emph{Pontryagin duality theorem} \citep[p.28]{Rudin}, which states that $A$ is isomorphic to the \emph{bidual} $A^{**} = \{1\}^*=\{e\}$. Hence, $A$ only contains the unity element, which in turn means that $k=ke^{-1}\in\{G,G\}$ for every $k\in G$. That is, $G$ is perfect.
\end{proof}


\section{Details of the Spinor Field Networks experiments} \label{app:spinor-field}
As advertised, we begin with a review of spinors and the representation theory of $\SU(2)$. 
The reader is referred to e.g. \citep{ biedenharnAngularMomentumQuantum1984,hallquantum, zeeGroupTheoryNutshell2016} for in depth treatments. 
$\SU(2)$ has one so-called irreducible representation (irrep) for each dimension $n>0$.
Rather than labelling them by $n$, they are typically labelled by $\ell = 0, 1/2, 1, 3/2, \ldots$, where the $\ell$'th irrep (or rather the space it is acting on) has dimension $2\ell + 1$.
$\ell=0$ corresponds to scalar values that don't transform under $\SU(2)$ (i.e., $\SU(2)$ acts as the identity on them), $\ell=1$ corresponds to vector values on which $\SU(2)$ acts as 3D rotation matrices and $\ell=1/2$ corresponds to spinor values on which $\SU(2)$ acts as the ordinary $2\times 2$ complex matrix representation of $\SU(2)$.
For integer $\ell$, the $\SU(2)$-irrep can be taken to consist of only real valued matrices and hence can be interpreted as acting on $\R^{2\ell+1}$.
These irreps are also irreps of $\SO(3)$.
For non-integer $\ell$ on the other hand, it is not possible to let the representation consist of only real valued matrices.
Thus these irreps act on $\C^{2\ell+1}$.
Furthermore, these irreps do not correspond to linear representations of $\SO(3)$, but only
to projective representations of $\SO(3)$.

Let us denote the irreps of $\SU(2)$ by $\widetilde\rho_\ell$, $\ell=0,1/2,1,3/2,\ldots$ and the spaces that they act on by $\mathcal{V}_\ell\sim\C^{2\ell+1}$ (or $\sim\R^{2\ell+1}$ for integer $\ell$).
All other finite dimensional representations of $\SU(2)$ can be decomposed into these irreps, meaning
that if $\SU(2)$ acts linearly on $\C^m$,
we can decompose the space as
\[
\C^m \sim \bigoplus_{\ell\in L} \mathcal{V}_\ell
\]
for some specific multiset $L$ of indices such that $\sum_{\ell\in L} (2\ell + 1) = m$.
One particular decomposition is the decomposition of the tensor product of two
$\mathcal{V}_\ell$'s. 
Given $\ell \geq \ell'$ we have 
that
\begin{equation}\label{eq:clebsch-gordan}
    \mathcal{V}_\ell\otimes\mathcal{V}_{\ell'} \sim
    \mathcal{V}_{\ell-\ell'} \oplus \mathcal{V}_{\ell-\ell'+1}\oplus\cdots\oplus\mathcal{V}_{\ell+\ell'-1}\oplus\mathcal{V}_{\ell+\ell'}.
\end{equation}
In particular, there exists a change of basis matrix that transforms canonical coordinates on $\mathcal{V}_\ell\otimes\mathcal{V}_{\ell'}$ into coordinates which can be split into blocks, each representing coordinates for one $\calV_j$.  We denote this matrix from the left hand side of \eqref{eq:clebsch-gordan} to the right hand side by $C$. 
$C$ consists of so-called Clebsch-Gordan coefficients,
which are known and hence it is feasible to perform this decomposition.

The basic idea for our $\SU(2)$-equivariant neural network architecture is now to have layers that combine features and filters using the tensor product.
This is a standard approach to constructing $\SO(3)$-equivariant neural networks \citep{tensorfield, geigerE3nnEuclideanNeural2022, kondorClebschGordanNetsFully2018}
which we here generalize slightly to $\SU(2)$.
Say that we have a feature $f$ that transforms according to $\widetilde\rho_\ell$ and a filter $\psi$ that 
transforms according to $\widetilde\rho_{\ell'}$.
Their tensor product will transform according to $\widetilde\rho_\ell\otimes\widetilde\rho_{\ell'}$.
Note that \eqref{eq:clebsch-gordan} means that $C (f\otimes \psi)$ transforms according to
\[
     C (f\otimes \psi) \xmapsto{\SU(2)} C (\widetilde\rho_\ell\otimes\widetilde\rho_{\ell'})(f\otimes \psi)
    = (\widetilde\rho_{\ell-\ell'} \oplus \widetilde\rho_{\ell-\ell'+1}\oplus\cdots\oplus\widetilde\rho_{\ell+\ell'-1}\oplus\widetilde\rho_{\ell+\ell'}) C (f\otimes \psi).
\]
This means that $C(f\otimes \psi)$ is a concatenation of quantities that transform according to known irreps.
Since the tensor product is the most general bilinear map, a good way to look at it is that $C$ defines all possible multiplications of $f$ and $\psi$ that are $\SU(2)$ equivariant.
If we are only interested in a particular output type, say the part of $C(f\otimes \psi)$ that transforms
according to $\widetilde\rho_{\ell-\ell'}$, then we can simply let $\tilde C$ consist of the corresponding rows of $C$, i.e., the first $2(\ell-\ell')+1$ rows, and the quantity $\tilde C(f\otimes \psi)$ will be the output with correct behaviour.
$C$ typically contains many zeros and so we don't actually have to compute the complete $f\otimes \psi$ if we are only interested in some particular output irreps, but can instead only compute those values
that will be multiplied by non-zero values of $C$.
If our network layers consist of tensor products like this, it is easy to keep track of how the
features in the network transform under $\SU(2)$, and hence ensure that the output of the network
transforms correctly.
In our experiments we have outputs that transform according to $\widetilde\rho_{1/2}$ and so
build networks that guarantee such output.

\begin{example}
    Let us walk through the example of tensoring a vector feature $f$ by a vector filter $\psi$.
    Both the feature and filter transform according to $\widetilde\rho_1$ which 
    we can take to consist of real-valued 3D rotation matrices.
    This means that according to \eqref{eq:clebsch-gordan}, the output should be decomposable into
    $\mathcal{V}_0\oplus\mathcal{V}_1\oplus\mathcal{V}_2$, i.e., a scalar, a vector and a ``higher order'' 5-dim. feature.
    The reader already knows how to combine two vectors to form a scalar and a vector,
    we use the scalar product and the cross product!
    Indeed, the tensor product has the following form:
    \[
    (f_1, f_2, f_3)^T \otimes (\psi_1, \psi_2, \psi_3)^T = (f_1\psi_1, f_1\psi_2, f_1\psi_3, f_2\psi_1, f_2\psi_2, f_2\psi_3, f_3\psi_1, f_3\psi_2, f_3\psi_3)^T.
    \]
    The first row of $C$ selects those values that yield the scalar product of $f$ and $\psi$:
    \[
        C_1 = \begin{pmatrix}
        1 & 0 & 0 & 0 & 1 & 0 & 0 & 0 & 1 \\
        \end{pmatrix}, \quad C_1 (f\otimes \psi) = f\cdot \psi.
    \]
    The second to fourth rows of $C$ select those values that yield the cross product of $f$ and $\psi$:
    \[
        C_{2:4} = \begin{pmatrix}
        0 & 0 & 0 & 0 & 0 & 1 & 0 & -1 & 0 \\
        0 & 0 & -1 & 0 & 0 & 0 & 1 & 0 & 0 \\
        0 & 1 & 0 & -1 & 0 & 0 & 0 & 0 & 0 \\
        \end{pmatrix}, \quad C_{2:4} (f\otimes \psi) = f\times \psi.
    \]
    The last rows of $C$ select values that yield a $5$-dim. product of $f$ and $\psi$ transforming according to the irrep $\widetilde\rho_2$, we don't write it out here as it is not too illuminating.
\end{example}

\subsection{Tensor spherical harmonics filters}
\label{app:tensor-harmonics}
We defined the layers in our Spinor Field Networks like
\begin{equation}\label{eq:tensor-spinor-filter}
    f_i' = \sum_{j\neq i} (\psi(x_j - x_i) \oplus s_j) \otimes f_j.
\end{equation}
A tensor spherical harmonic (see \citep{biedenharnAngularMomentumQuantum1984}) is the tensor product of a spherical harmonic and some other quantity that
transforms under $\SU(2)$ according to some representation $\widehat\rho$.
Then \eqref{eq:tensor-spinor-filter} could become for instance
\begin{equation}
    f_i' = \sum_{j\neq i} (\psi(x_j - x_i) \otimes s_j) \otimes f_j.
\end{equation}
Technically our filters in \eqref{eq:tensor-spinor-filter} are tensor spherical harmonics where the spherical harmonic tensored by $s_j$ is the trivial $Y^0$.

\subsection{Training/implementation details}\label{app:sfn-details}
\begin{table}[h]
    \centering
    \begin{tabular}{lrrrrrr}
 & \multicolumn{6}{c}{\cellcolor[HTML]{EFEFEF}Layer 1} \\
 & \multicolumn{3}{c}{\cellcolor[HTML]{ECF4FF}Input} & \multicolumn{3}{c}{\cellcolor[HTML]{FFFFC7}Filter} \\
& \multicolumn{1}{l}{Scalars} & \multicolumn{1}{l}{Spinors} & \multicolumn{1}{l}{Vectors} & \multicolumn{1}{l}{Scalars} & \multicolumn{1}{l}{Spinors} & \multicolumn{1}{l}{Vectors} \\
\textit{Spinors as scalars} & 4 & 0 & 1 & 1 & 0 & 1 \\
\textit{Spinors as features} & 0 & 1 & 1 & 1 & 0 & 1 \\
\textit{Spinors as filters} & 0 & 0 & 1 & 1 & 1 & 1 \\
\textit{Spinors squared as vector features} & 0 & 0 & 3 & 1 & 0 & 1 \\
\textit{Spinors squared as vector filters} & 0 & 0 & 1 & 1 & 0 & 3 \\
 & \multicolumn{6}{c}{\cellcolor[HTML]{EFEFEF}Layer 2} \\
 & \multicolumn{3}{c}{\cellcolor[HTML]{ECF4FF}Input} & \multicolumn{3}{c}{\cellcolor[HTML]{FFFFC7}Filter} \\
 & \multicolumn{1}{l}{Scalars} & \multicolumn{1}{l}{Spinors} & \multicolumn{1}{l}{Vectors} & \multicolumn{1}{l}{Scalars} & \multicolumn{1}{l}{Spinors} & \multicolumn{1}{l}{Vectors} \\
\textit{Spinors as scalars} & 32 & 0 & 8 & 1 & 0 & 1 \\
\textit{Spinors as features} & 32 & 12 & 12 & 1 & 0 & 1 \\
\textit{Spinors as filters} & 32 & 4 & 4 & 1 & 1 & 1 \\
\textit{Spinors squared as vector features }& 32 & 0 & 8 & 1 & 0 & 1 \\
\textit{Spinors squared as vector filters} & 32 & 0 & 8 & 1 & 0 & 3 \\
 & \multicolumn{6}{c}{\cellcolor[HTML]{EFEFEF}Layer 3} \\
 & \multicolumn{3}{c}{\cellcolor[HTML]{ECF4FF}Input} & \multicolumn{3}{c}{\cellcolor[HTML]{FFFFC7}Filter} \\
 & \multicolumn{1}{l}{Scalars} & \multicolumn{1}{l}{Spinors} & \multicolumn{1}{l}{Vectors} & \multicolumn{1}{l}{Scalars} & \multicolumn{1}{l}{Spinors} & \multicolumn{1}{l}{Vectors} \\
\textit{Spinors as scalars} & 32 & 0 & 8 & 1 & 0 & 1 \\
\textit{Spinors as features} & 0 & 12 & 0 & 1 & 0 & 1 \\
\textit{Spinors as filters} & 32 & 4 & 4 & 1 & 1 & 1 \\
\textit{Spinors squared as vector features} & 32 & 0 & 8 & 0 & 1 & 0 \\
\textit{Spinors squared as vector filters} & 32 & 0 & 8 & 0 & 1 & 0
\end{tabular}
    \caption{Number of scalars, spinors and vectors in each layer of the five network types.
    The number of outputs of each type is the same as the number of inputs to the next layer.
    The final layer always outputs one spinor, except for the \emph{Spinors as scalars} net which outputs four scalars.}
    \label{tab:spinorfeats}
\end{table}

The implementation is inspired by the \texttt{e3nn} package \citep{geigerE3nnEuclideanNeural2022}, but differs in some crucial aspects (apart from the obvious fact of using spinors).

A tensor product layer in our framework works as follows.
We define the number of input, filter and output scalars, spinors and vectors.
For each output requested, certain input types can be used.
For instance, to produce an output scalar, we can multiply 
\begin{enumerate}[(i)]
    \item an input scalar and a filter scalar, or 
    \item an input spinor and a filter spinor, or 
    \item an input vector and a filter vector.
\end{enumerate}
So for each output scalar requested, we compute a weighted sum of all input scalars and multiply it by a weighted sum of all filter scalars, and similarly for spinors and vectors.
These weighted sums contain the learnable parameters in the net.
We do not use learnt radial functions as in Tensor Field Networks, as that did not seem to be necessary for our data.

When mapping from spinors to scalars/vectors, we get complex valued scalars and vectors.
These are split into real and imaginary parts, hence if 10 output scalars are requested we only compute 5 complex output scalars yielding 10 real output scalars.
Note that the operation of taking real and imaginary parts of a vector 
is rotation equivariant as we work in the basis where rotation matrices are real valued.

The loss used is $L_2$-loss on the regressed spinor, but accounting for arbitrary sign due to projective ambiguity, i.e.
\[
\text{loss}(s_{\text{pred}}, s) = \min(\|s_{\text{pred}} - s\|, \|s_{\text{pred}} + s\|).
\]

The networks contain between 2500 and 3000 parameters and are small enough to train on CPU in tens of seconds per net.
All nets were trained using the Adam optimiser with default PyTorch settings, for 300 epochs with learning rate $10^{-2}$.

The number of features of each type in the layers and filters of the Spinor Field Networks is summarised in Table~\ref{tab:spinorfeats}.

\section{Details and further results for ViererNet experiments} \label{sec:app-vierer}

We begin by providing some details about the model used in the experiments in the main paper.

The layers of the projectively equivariant ViererNet used in the main paper has, in succession, $32$, $32$, $32$ and $11$ intermediate $\R^{n,n}$-valued feature tuples, each of size $4= (\Z_2^2)^*$. These are subsequently average-pooled to $11$ $4$-tuples. To transform these to scalars, we use $11$ selector vectors, and send it through first a batch-norm and then a softmax layer to end up with a final output distribution $p \in \R^{11}$. Before sending the data to the first layer, we normalize it to have zero mean. The base-line model is a three-layer standard CNN, with $32$, $32$, $32$ and $11$ $\R^{n,n}$-valued intermediate features, which after average pooling simply feeds their $11$ output features through first a batch-norm and then a softmax layer. Both models use $\tanh$ non-linearities. The two models have different memory footprints, due to the ViererNet having to handle $4$ features in each layer, but each input-output channel pair can be described the same number of parameters (due to Proposition \ref{prop:structure}(ii)). The selector vectors do result in 44 additional ViererNet parameters, but this is miniscule compared to the about 30K in total. The architecture is schematically presented in Figure \ref{fig:vierernet}

We train for 100 epochs using the Adam algorithm, with the learning rate parameter set to $10^{-4}$ . 
\subsection{Additional experiments on CIFAR10}

To complement the MNIST experiments in the paper, we tested ViererNet to classify a similarly modified version of the CIFAR10 dataset.

\paragraph{Data} We modify the CIFAR10 dataset in two ways. First, we modify the dataset exactly the way MNIST is modified in the main paper: I.e., we add a class 'not an image' which we put (or not, depending on the class) images after they are flipped horizontally or vertically: The images in the classes 'airplane', 'automobile' and 'bird' are never put in the 'not an image'-class, the 'cat' and 'deer' and 'dog' images are put there after a horizontal flip, 'frog' and 'horse' after a vertical flip and 'ship' and 'truck' after both types of flip. We refer to this version of the dataset as the 'both flip' dataset.

The above modification has a problem, and that is that since flipping a CIFAR10-image vertically, in contrast to the MNIST images, most often results in a plausible image. The horizontal flips however results in objects that are upside down, and hence are different from the 'normal' CIFAR images. For this reason, we also consider a different type of modification of the dataset. In this version, we still perform flips with the same probabilities as before, but choose to not change the labels after vertical axis flips at all. We refer to this version of the data set as the 'single flip' datasets.

\begin{figure*}[t]
    \centering
    \includegraphics[width=.9\textwidth]{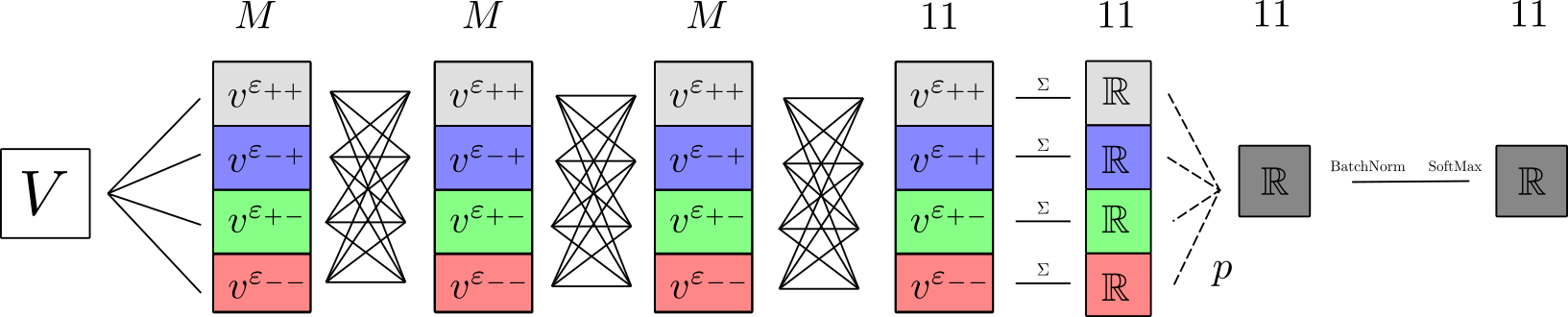}
    \caption{The ViererNet architechtures. $V$ is the input space. For the MNIST experiment, $M=32$, and for CIFAR, $M=64$. Best viewed in color.} \label{fig:vierernet}
\end{figure*}

\paragraph{Models} We use the same ViererNet and baseline architectures as in the main paper, with the only difference that the number of intermediate features are $64$, $64$, $64$, $11$, instead of $32$, $32$, $32$, $11$.

\paragraph{Results} Using the same training algorithm and learning rate (e.g. Adam with  $\texttt{lr}=10^{-4}$) as for the MNIST experiments, we train each model on each dataset for 100 epochs. The experiments are repeated 30 times. In Figure \ref{fig:cifar_res}, the evolutions of the median training and test losses are depicted, along with errorbars encapsulating $80\%$ of the experiments. In comparison to the experiments in the main paper, the models are performing more similar. When comparing the models at the epochs with best median performance, the baseline outperforms the ViererNet slightly on both datasets---$51.0\%$ vs. $49.9\%$ on the both flip dataset and $59.0\%$ vs. $57.0\%$ on the single flip dataset. The performance difference is however not as significant as in the main paper---the $p$-values for the baseline outperforming ViererNet is only $0.26$ for the both flip dataset, and $.11$ for the single flip dataset.

\begin{figure*}[h]
\centering
        \includegraphics[width=.42\textwidth]{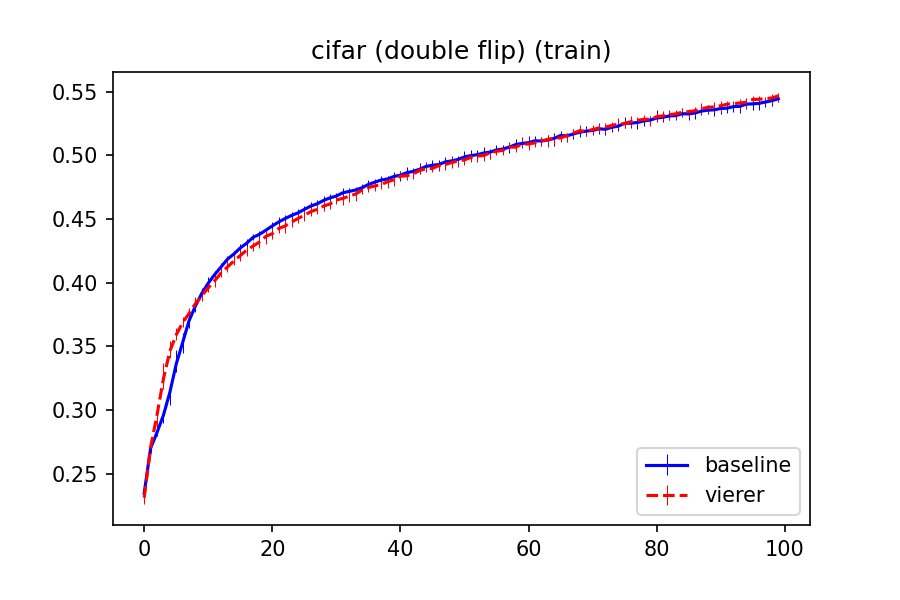}
        \includegraphics[width=.42\textwidth]{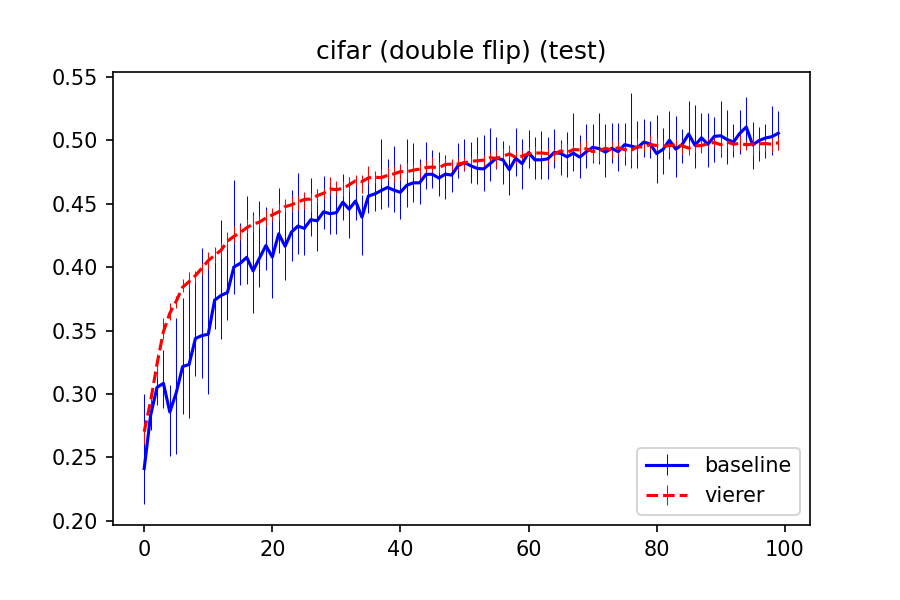}
        \includegraphics[width=.42\textwidth]{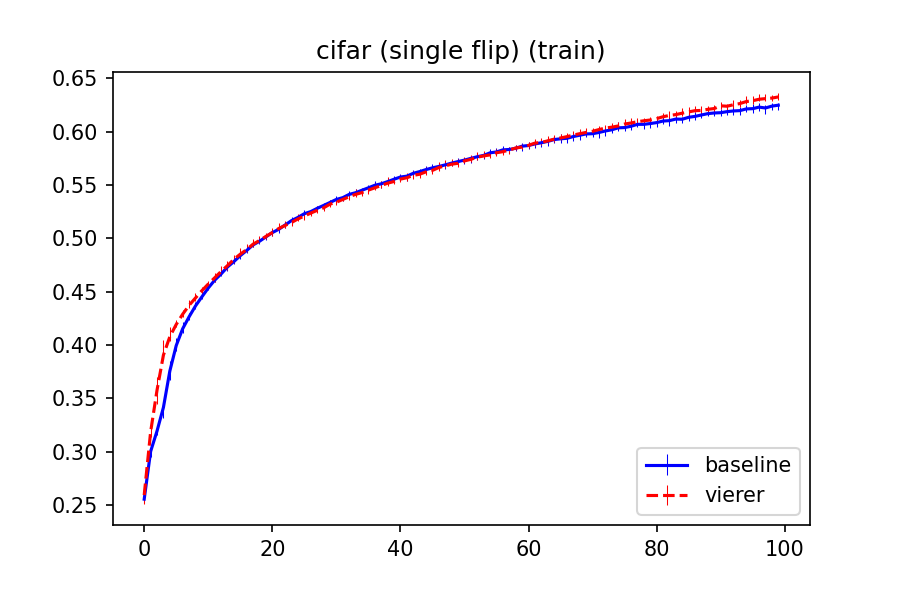}
        \includegraphics[width=.42\textwidth]{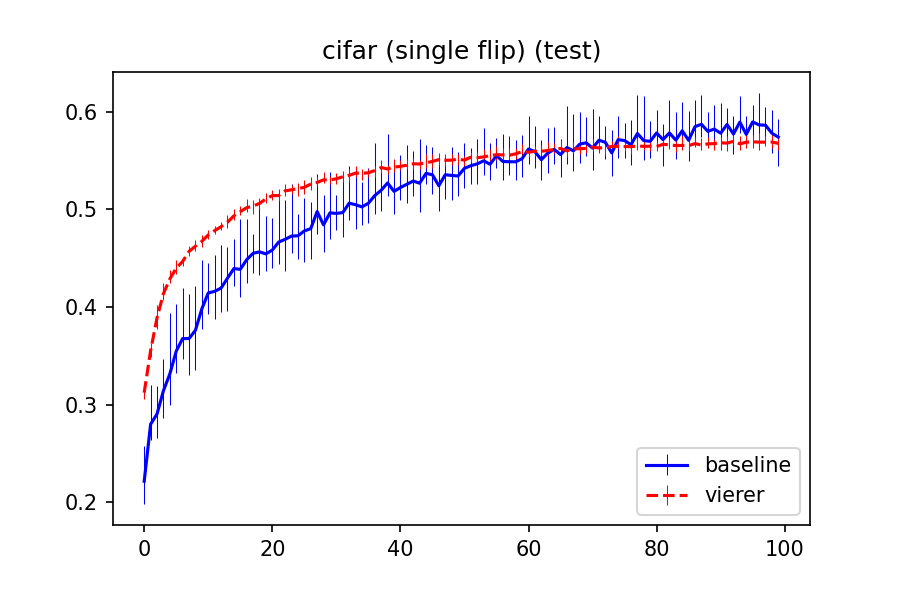}
        \caption{Results for the CIFAR10 datasets. Top left: training accuracy on the 'double flip' version. Top right: test accuracy for the 'double flip' version. Bottom left: training accuracy on the 'single flip' version. Bottom right: test accuracy on the 'single flip' version. The errorbars depict confidence intervals of $80\%$. Generally, the baseline performs slightly better at the end of the training, but well within the margin of error. The ViererNet is quicker to generalize than the baseline. }
        \label{fig:cifar_res}
\end{figure*}

\paragraph{Performance vs. number of iterations}
Figures \ref{fig:cifar_res} and \ref{fig:mnist_res} both suggest that the ViererNet can generalize to the test data faster, since it outperforms the baseline in early iterations. To test this quantitatively, we perform the same statistical tests as above, but choose the best performing epoch among the 25, 50 and 75 first iterations instead out of all 100 iterations. We report the median accuracies for all experiments in Table \ref{tab:vierer_table}. Statistically better figures are highlighted in bold. We see that the ViererNet outperforms the baseline in the MNIST experiments by wider margins, and with high statistical confidence, earlier in the training. Looking at the median values in Table \ref{tab:vierer_table}, the trend that the ViererNet is quicker to train apparently persists to some extent for the CIFAR experiments. However, a closer looks reveals that we can only with high statistical confidence say that the ViererNet is better than the baseline after 25 iterations on the single flip dataset.

\begin{table}[]

    {\scriptsize \center
    \begin{tabular}{r|rrrr | rrrr | rrrr}
\toprule
    Dataset & \multicolumn{4}{c}{MNIST} & \multicolumn{4}{c}{CIFAR (double flip)} & \multicolumn{4}{c}{CIFAR10 (single flip)}\\
    Max it.  & 25 & 50 & 75 & 100 & 25 & 50 & 75 & 100 & 25 & 50 & 75 & 100 \\
\midrule
Baseline & 80.3\% & 86.6\% & 89.4\% & 90.2\% & 43.2\% & 48\% & 49.4\% & 51.0\% & 47.3\% & 53.7\% & 57.2\% & 59.0\%\\
ViererNet & \textbf{89.6\%} & \textbf{91.7\%} & \textbf{92.5\%} & \textbf{92.8\%} & 45.1\% & 48.1\% & 49.4\% & 49.9\% & \textbf{52.1\%} & 55.2\% & 56.5\% & 57.0\%\\
\bottomrule
\end{tabular}

    }

    \caption{Median accuracies (over 30 runs) in all experiments for different upper iterations bounds. Statistically significantly ($p<.05$) better performing models for each dataset and upper iteration limits are marked in bold font. (In fact, using the higher $p$-value $0.1$ would not change the appearance of the table).}
    \label{tab:vierer_table}
\end{table}

\paragraph{Discussion} Although the CIFAR experiments are in general not statistically conclusive, we can conclude that the ViererNet  in comparison to the MNIST experiments in the main paper perform worse on the CIFAR set. We believe the main reason behind this is simply that the property of projective $\Z_2^2$-equivariance is less well suited for these tasks. For the both flips version of the dataset, as we discussed above, flipping vertically will most often lead to a new image which is plausible to come from the same CIFAR-class, and it will be less helpful to apply the strategy of either changing or maintaining the class when flipping the images of the class. As for the single flip version of the dataset, the same argument does not apply. However, in this case, it is ultimately only a subgroup of the Vierergroup that is changing the images in the dataset. Hence, a ViererNet is not the best suited architecture---a $\Z_2$-equivariant would instead be.

A more thorough empirical comparison of the effectiveness of projectively equivariant architectures for classification tasks would be very interesting, but out of scope for this article. We leave it for future work.

\subsection{Training details}
We used cross-entropy-loss in our experiments, to account for the different sizes of the classes: the weight vector vectors were chosen as
\begin{align*}
    w_{\mathrm{MNIST}} = [1,1,1,1.5,1.5,1.5,1.5,1.5,3,3,1], w_{\mathrm{CIFAR}}= [1,1,1,1.5,1.5,1.5,1,1,1.5,1.5,1]
\end{align*}
for the MNIST and CIFAR10 experiments, respectively.
A batch size of 32 was used for all experiments.
The models were trained, on single A40 GPUs, in parallell on a high performance computing cluster. The MNIST experiments presented in the main paper used in total about 110 GPU hours, whereas the CIFAR10-experiments presented in the appendix used about 170 GPU hours each. 

\subsection{Implementational details}
Since all of the basis elements, as depicted in Figure \ref{fig:viererbasis} are sparse, we implement the ViererNet layers i PyTorch by saving nine parameters per in-layer-out-layer pair, which we turn into four $3\times3$ filters through multiplying the $\R^9$ vector with a sparse matrix. These are then used in padded standard PyTorch convolution layers, and combined in the manner described in Section \ref{sec:projarc}.

\end{document}